\documentclass{svproc}
\usepackage{float}
\usepackage{url}
\usepackage{xcolor,overpic,contour}
\usepackage{graphicx,enumerate}
\usepackage{amsmath,amssymb}
\usepackage{longtable}
\usepackage{array}
\newcolumntype{P}[1]{>{\raggedright\arraybackslash}p{#1}} 
\usepackage{graphicx} 
\def\CC{{\mathbb C}}
\def\RR{{\mathbb R}}
\def\Vkt#1{{\mathbf #1}}

\def\dach#1{\widehat{#1}}
\newcommand{\mVkt}[1]{\dach{\Vkt #1}}  
\newcommand{\bVkt}[1]{\bar{\Vkt #1}} 
\contourlength{0.2mm}
\newtheorem{thm}{Theorem}

\begin{document}
\mainmatter

\title{Higher-Order Flexible Configurations of Planar Parallel Manipulators Constructed by Averaging}

\titlerunning{Higher-Order Flexible Configurations Constructed by Averaging}

\author {Yudi Zhao \and Georg Nawratil}

\authorrunning{ Y. Zhao and G. Nawratil} 

\institute{Institute of Discrete Mathematics and Geometry \& \\ 
Center for Geometry and Computational Design, TU Wien, Austria\\
\email{\{yzhao,nawratil\}@geometrie.tuwien.ac.at}\\ 
WWW home page: \texttt{https://www.geometrie.tuwien.ac.at/zhao/}\\
\texttt{https://www.geometrie.tuwien.ac.at/nawratil/}}

\maketitle 

\begin{abstract}
This paper investigates singular configurations of  planar $3$-RPR parallel manipulators, which result from applying the averaging technique to solution pairs of their direct kinematic problem. Without computing the zeros of the corresponding degree 6 polynomial we parametrize the input pairs and 
determine their relative orientation in a way that the flexion order of the averaged configurations increases. Moreover, the obtained results are visualized for concrete examples. The presented methodology can also be used for studying the spherical and spatial analogues of planar $3$-RPR parallel manipulators.

\keywords{kinematics, algebraic geometry, higher flexion order, parallel manipulator, averaging technique}
\end{abstract}

\section{Introduction}

A bar-joint framework $\mathcal{G}(\mathcal{X})$ consists of a set $\mathcal{X}=\left\{X_{1}, X_2,\ldots \right\}$ of distinct vertices
and a graph $\mathcal{G}$  on $\mathcal{X}$ encoding the combinatorial structure. 
A vertex $X_i$ corresponds to 
a rotational/spherical joint (without clearance) in the case of a planar/spatial framework and an edge connecting two vertices corresponds to a bar between the associated joints.

By defining the lengths of the bars, which are assumed to be non-zero, the intrinsic metric of the framework is fixed. 
In general this assignment does not uniquely determine the embedding of the framework into the Euclidean space $\RR^s$($s=2$ for planar and $s=3$ for spatial frameworks), thus such a framework can have 
different incongruent realisations. Note that a realisation is denoted by $\mathcal{G}(\Vkt X)$ with $\Vkt X=(\Vkt x_1, \Vkt x_2, \ldots )$ where $\Vkt x_i$ denotes the corresponding vector of the vertex $X_i$. 

According to \cite{nawratil:global} the number of coinciding realisations can be used to define the flexion order of a realisation as follows:

\begin{definition}[Flexion order of a realisation]\label{def:order}
If a realisation $\mathcal{G}(\Vkt X)$ 
does not belong to a continuous flexion of the framework (over $\CC$), then its flexion order is defined as $c - 1$, where $c$ is the
multiplicity of $\mathcal{G}(\Vkt X)$, i.e., the number of coinciding framework realisations at $\mathcal{G}(\Vkt X)$.
\end{definition}

\subsection{Averaging Method}

According to Whiteley~\cite{whiteley:1997} the methodology of averaging, which is illustrated in Fig.\ \ref{fig:avg3rpr}a, can be used to synthesise an infinitesimally flexible configuration  from two incongruent\footnote{Non-existence of a direct or an indirect isometry $\iota$: $\RR^s\rightarrow \RR^s$  with $\iota(\Vkt x_k)=\Vkt x'_k$ $\forall k$.} realisations. This construction, which was already known to Pogorelov according to \cite[Sec.\ 4.4]{izmestiev:2009}, is described next:

\begin{definition}[Averaged configuration] \label{def:averconf}
Take the midpoints $\bVkt x_k:=\tfrac{\Vkt x_k + \Vkt x'_k}{2}$ of corresponding vertices $\Vkt x_k$  and $\Vkt x'_k$  of 
two incongruent framework realisations $\mathcal{G}(\Vkt X)$ and $\mathcal{G}(\Vkt X')$, respectively. In case these points define a framework of the same combinatorial structure\footnote{For bar-joint frameworks this is not always the case as two edge-connected vertices can coincide after averaging, which results in a bar of length zero.}, we call $\mathcal{G}(\bVkt X)$ the averaged configuration.  
\end{definition}

For this averaged configuration  $\mathcal{G}(\bVkt X)$ the following statement holds true:

\begin{thm}[Averaging theorem for bar-joint frameworks] 
\label{thm:aver}
Assume that  $\mathcal{G}(\bVkt X)$  denotes the averaged configuration of the incongruent realisations 
$\mathcal{G}(\Vkt X)$ and $\mathcal{G}(\Vkt X')$ of a bar-joint framework. Then $\mathcal{G}(\bVkt X)$ is at least of flexion order 1.
\end{thm}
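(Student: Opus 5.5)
Flexion order at least 1 means, by Definition~\ref{def:order}, that the averaged realisation is a multiple solution of the constraint system, i.e.\ infinitesimally flexible, or else it lies on a continuous flexion, which is even stronger. So the plan is to exhibit a non-trivial infinitesimal flex of $\mathcal{G}(\bVkt X)$. The natural candidate is the half-difference vector field $\Vkt v_k:=\tfrac{\Vkt x_k-\Vkt x'_k}{2}$. The standard polarization argument then shows that it is an infinitesimal flex.

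First, for every edge $\{X_i,X_j\}$ both realisations have the same bar length, so
\[
\|\Vkt x_i-\Vkt x_j\|^2=\|\Vkt x'_i-\Vkt x'_j\|^2 .
\]
Write $\Vkt a=\Vkt x_i-\Vkt x_j$ and $\Vkt b=\Vkt x'_i-\Vkt x'_j$. Then
\[
0=\|\Vkt a\|^2-\|\Vkt b\|^2=(\Vkt a+\Vkt b)\cdot(\Vkt a-\Vkt b)=4\,(\bVkt x_i-\bVkt x_j)\cdot(\Vkt v_i-\Vkt v_j).
\]
This is exactly the linearized edge constraint at $\bVkt X$. Hence $(\Vkt v_k)$ lies in the kernel of the rigidity matrix of $\mathcal{G}(\bVkt X)$. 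Definition~\ref{def:averconf} guarantees that $\bVkt X$ is a genuine realisation with non-zero bar lengths, so this rigidity matrix is the relevant one.

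Second, and this is the main obstacle, the flex must be shown to be non-trivial, i.e.\ not the velocity field of a Euclidean motion. Suppose instead that $\Vkt v_k=\Vkt M\bVkt x_k+\Vkt t$ with $\Vkt M$ skew-symmetric. Then consider the Cayley-type map
\[
\phi(\Vkt y)=(\Vkt I-\Vkt M)^{-1}\bigl((\Vkt I+\Vkt M)\Vkt y+2\Vkt t\bigr).
\]
The matrix $\Vkt I-\Vkt M$ is invertible because $\Vkt M$ is real skew-symmetric. Since $(\Vkt I-\Vkt M)^{-1}(\Vkt I+\Vkt M)$ is orthogonal, $\phi$ is a Euclidean isometry. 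A direct check from $\Vkt x_k=\bVkt x_k+\Vkt v_k$ and $\Vkt x'_k=\bVkt x_k-\Vkt v_k$ gives
\[
(\Vkt I-\Vkt M)\Vkt x_k=(\Vkt I+\Vkt M)\Vkt x'_k+2\Vkt t ,
\]
that is, $\phi(\Vkt x'_k)=\Vkt x_k$ for all $k$. This contradicts the assumed incongruence. Here it matters that the footnote's notion of incongruence excludes both direct and indirect isometries; in fact only the direct case is needed for this step.

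Third, translate back to Definition~\ref{def:order}. If $\mathcal{G}(\bVkt X)$ lies on a continuous flexion, there is nothing to prove. Otherwise the realisation variety is zero-dimensional at $\bVkt X$ after the rigid motions are factored out. A non-trivial infinitesimal flex means the Jacobian of the constraint system, restricted to the normalized coordinates, is rank-deficient. Therefore $\bVkt X$ is a singular, i.e.\ multiple, solution with $c\ge 2$, and the flexion order $c-1\geq 1$. The step from a rank-deficient Jacobian to multiplicity at least two is standard intersection-theoretic reasoning and follows the convention of \cite{nawratil:global}, so I expect the only real work to be the non-triviality argument in the second step.
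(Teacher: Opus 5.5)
Your proposal is correct and is essentially the standard argument the paper relies on. The paper gives no proof of its own but defers to Stachel, whose proof is exactly the polarization identity showing that $\tfrac12(\Vkt x_k-\Vkt x'_k)$ is an infinitesimal flex of $\mathcal{G}(\bVkt X)$, with non-triviality forced by incongruence. Your Cayley-transform check, which indeed needs only the absence of a direct isometry, and your closing Jacobian/multiplicity step for Definition~\ref{def:order} make explicit what the cited proof and Section~\ref{sec:order12} take for granted.
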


A simple proof for this theorem is e.g.\ given in \cite{stachel:iwssip,stachel:saj}.
This averaging technique has some degrees of freedom as the relative pose of the two incongruent realisations can be chosen arbitrarily. It can easily be seen that the translational part does not affect the shape of the averaged configuration $\mathcal{G}(\bVkt X)$. 
\begin{lemma}[Translation invariance of the averaged configuration] \label{lem:invariant}
$\mathcal{G}(\bVkt X)$ remains invariant under the replacement of $\mathcal{G}(\Vkt X)$ and $\mathcal{G}(\Vkt X')$ by $\mathcal{G}(\mVkt X)$ and $\mathcal{G}(\mVkt X')$, respectively, with 
$\mVkt X=(\Vkt x_1+\Vkt t, \Vkt x_2+\Vkt t,\ldots)$ and 
$\mVkt X'=(\Vkt x'_1-\Vkt t, \Vkt x'_2-\Vkt t,\ldots)$ where $\Vkt t\in\mathbb R^s$ denotes the translation vector. Therefore, the  flexion order of $\mathcal{G}(\bVkt X)$ is invariant under a change of the relative position of $\mathcal{G}(\Vkt X)$ and $\mathcal{G}(\Vkt X')$.
\end{lemma}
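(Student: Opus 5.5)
The plan is a direct computation followed by a short argument about what "relative position" means. First I would average the translated realisations vertex by vertex: for every $k$,
\begin{equation*}
\tfrac{(\Vkt x_k+\Vkt t)+(\Vkt x'_k-\Vkt t)}{2}=\tfrac{\Vkt x_k+\Vkt x'_k}{2}=\bVkt x_k .
\end{equation*}
So the averaged configuration built from $\mathcal{G}(\mVkt X)$ and $\mathcal{G}(\mVkt X')$ coincides pointwise with $\mathcal{G}(\bVkt X)$. In particular, the condition of Definition \ref{def:averconf} (no two edge-connected vertices coincide) holds for one pair exactly when it holds for the other.

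Next I would check that the replacement is admissible. $\mathcal{G}(\mVkt X)$ is the image of $\mathcal{G}(\Vkt X)$ under the translation by $\Vkt t$, and $\mathcal{G}(\mVkt X')$ is the image of $\mathcal{G}(\Vkt X')$ under the translation by $-\Vkt t$. Translations are direct isometries, so both are again realisations of the framework with the same bar lengths. They also remain incongruent: an isometry $\iota$ with $\iota(\Vkt x_k+\Vkt t)=\Vkt x'_k-\Vkt t$ for all $k$ would give the isometry $\mathbf y\mapsto \iota(\mathbf y+\Vkt t)+\Vkt t$, which maps $\Vkt x_k$ to $\Vkt x'_k$ for all $k$, a contradiction. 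Hence Theorem \ref{thm:aver} still applies to the new pair, although it is not needed for the statement.

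Finally I would deduce the statement about relative position. Consider any change of the relative position of the two realisations by a translation, i.e.\ replacing $\Vkt X'$ by $\Vkt X'+\Vkt u$ while keeping $\Vkt X$ fixed. Take $\Vkt t=-\Vkt u/2$ and apply the computation above to the pair $(\Vkt X,\Vkt X'+\Vkt u)$. This shows that the pair $(\Vkt X,\Vkt X'+\Vkt u)$ yields the same configuration, up to the global translation by $\Vkt u/2$, as the pair $(\Vkt X,\Vkt X')$. Concretely, averaging $(\Vkt X,\Vkt X'+\Vkt u)$ gives $\bVkt x_k+\Vkt u/2$.

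The flexion order from Definition \ref{def:order} is defined by the multiplicity of coinciding realisations and by membership in a continuous flexion. Both notions are invariant under applying a global isometry to the realisation, since the bar-length equations are invariant under such isometries. Therefore the flexion order is unchanged.

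I expect no real obstacle. The only point needing care is this last invariance: one has to state explicitly that realisations are counted modulo (direct) isometries, so a translated configuration has the same multiplicity.
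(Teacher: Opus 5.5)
Your proposal is correct and takes the same approach as the paper, which treats the lemma as evident from exactly your midpoint computation $\tfrac{(\Vkt x_k+\Vkt t)+(\Vkt x'_k-\Vkt t)}{2}=\bVkt x_k$. Your further checks are correct elaborations of points the paper leaves implicit: incongruence is preserved, an arbitrary relative translation $\Vkt u$ only translates $\mathcal{G}(\bVkt X)$ globally by $\Vkt u/2$, and the flexion order is invariant under global isometries.
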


However, the shape of the averaged configuration $\mathcal{G}(\bVkt X)$ is not invariant under changes in the relative orientation of the two incongruent realisations. We therefore seek for a geometric characterisation of orientations for which the averaged configuration $\mathcal{G}(\bVkt X)$ exhibits a higher flexion order. \medskip

In this paper, which is organised as follows, we focus on planar parallel manipulators. 
In Section \ref{sec:ppm} we recall the geometric setting and notation of these manipulators and their interpretation as bar-plate frameworks, which allows 
an efficient computation of configurations with a higher flexion order using Blaschke-Gr\"unwald parameters (see Sections \ref{sec:direct} and \ref{sec:order12}). 
In Subsection \ref{sec:cases} we distinguish different kinds of pairing incongruent realisations and in Section \ref{sec:parametrisation} we parametrize the corresponding sets of these pairs. 
Based on this preparatory work we can formulate our theorems in a precise manner in Section \ref{sec:results}, where we also sketch the proofs and illustrate the obtained results. 
Finally, in Section \ref{sec:conc} the paper is concluded with some final remarks and an outlook to future research. 
\begin{figure}[t]
\begin{center}
\hfill
\begin{overpic}[height=25mm]{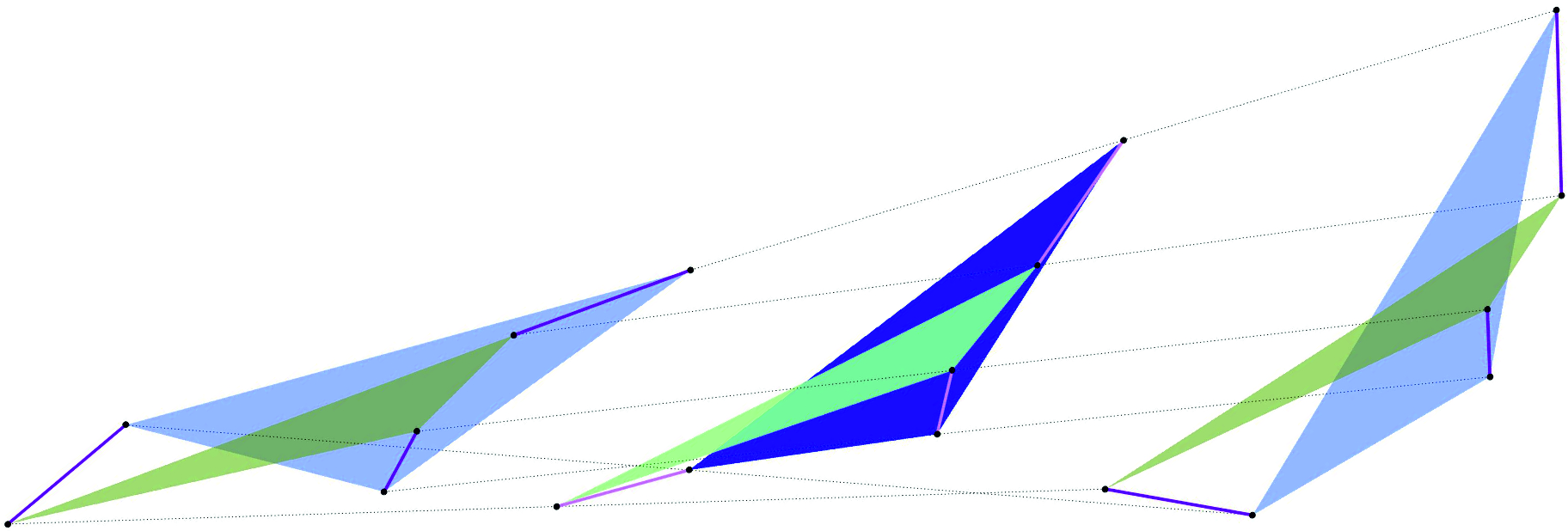}
   \begin{scriptsize}
   \put(-5,0){a)} 
\put(41,7){$\bVkt x_3$}   
\put(70,26){$\bVkt x_2$}
\put(58,3){$\bVkt x_1$}
\put(32,3){$\bVkt x_6$}
\put(67,15){$\bVkt x_5$}
\put(63,9){$\bVkt x_4$}
\end{scriptsize}
\end{overpic}
\hfill
  \begin{overpic}
   [height=25mm]{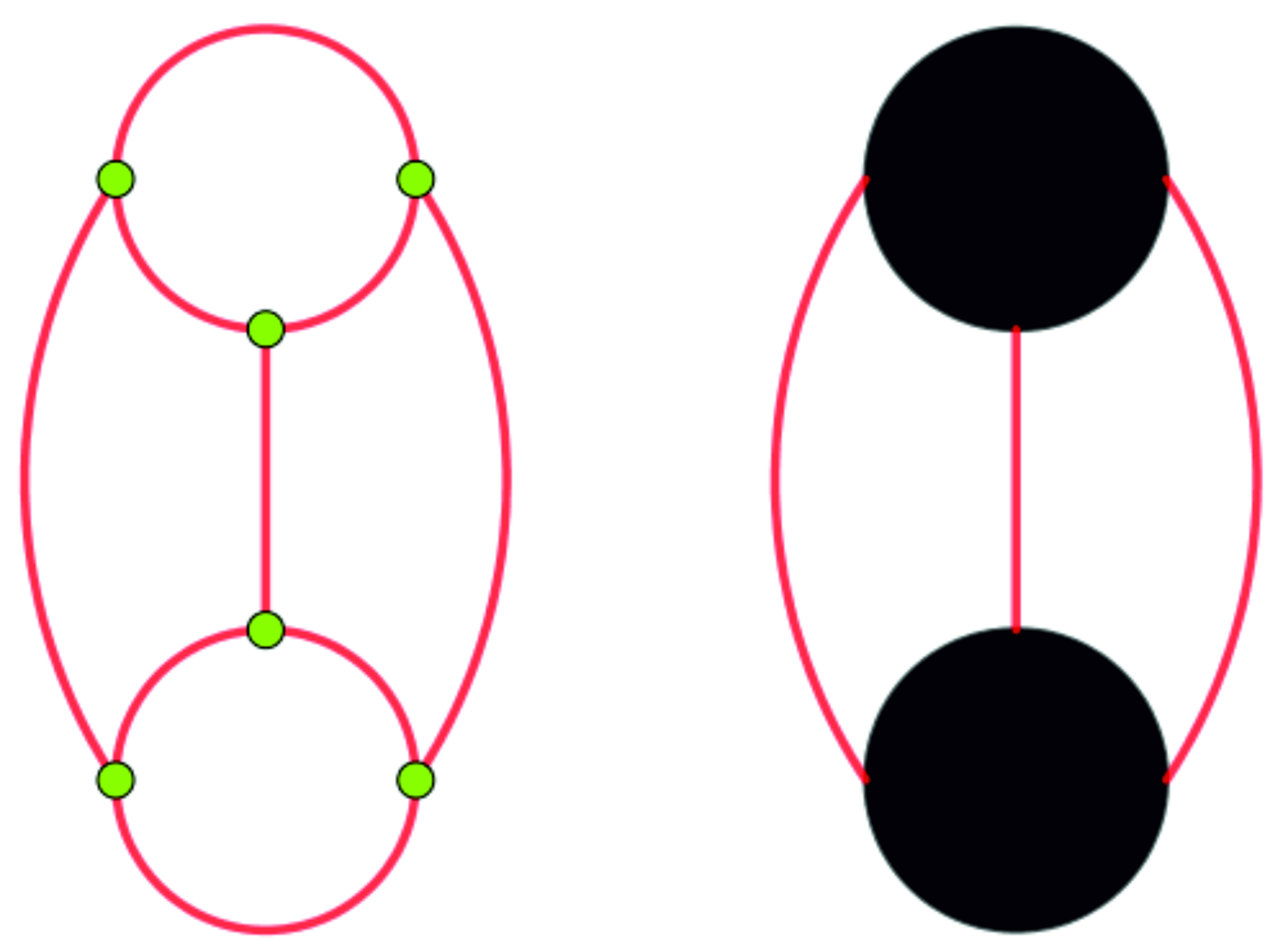}
   \begin{scriptsize}
\put(0,0){b)}
\put(54,0){c)}
\end{scriptsize}
\end{overpic} 
\end{center}	
\caption{
(a) Averaging method applied to a planar $3$-RPR 
parallel manipulator. The averaged configuration $\mathcal{G}(\bVkt X)$ is even of flexion order 2 and corresponds to Case ($I_i$) of Theorem \ref{thm:setA}. 
The underlying graphs of this parallel mechanism interpreted as bar-joint framework (b) and pin-jointed bar-plate framework (c), respectively. In the graphs the bars are printed in red, vertices in green and plates in black.}
  \label{fig:avg3rpr}
\end{figure}    
\section{Planar Parallel Manipulator}\label{sec:ppm}

A planar $3$-RPR parallel manipulator consists of a triangular base and triangular platform, both embedded in the Euclidean plane, which are connected by three pairwise distinct legs
arranged in a revolute--prismatic--revolute (RPR) joint sequence. The revolute joints of a leg are attached to the base anchor point $X_i$ and the corresponding platform anchor point $X_{i+3}$ for $i=1,2,3$, respectively, while the prismatic joint in each leg is actuated and controls the leg length. As a consequence, the moving platform possesses three degrees of freedom (DoFs) in the plane, namely two translational and one rotational DoF. We assume that for the non-zero lengths $r_i$ of the three legs fixed values are given, which is equivalent to the locking of the prismatic joints. 

One can interpret the triangular base and platform either as (i) triangular bar structures or as (ii) triangular plates. In Case (i) the planar $3$-RPR manipulator with locked P joints can be seen as a classical bar-joint framework and in Case (ii) as a pin-jointed bar-plate framework. According to \cite[Sec.\ 4]{nawratil:global} these different interpretations also effect the discussion of the flexion order of a configuration, in case that the triangular platform and/or base degenerate/s into a line.
In the remainder of the paper we interpret the planar $3$-RPR  parallel manipulator with locked P joints as a pin-jointed bar-plate framework. 

{\bf The advantage of this interpretation} is that configurations with a higher flexion order can be computed in a very efficient way using Blaschke-Gr\"unwald parameters (see Sections \ref{sec:direct} and \ref{sec:order12}). Moreover, this interpretation also allows that two anchor points of the platform/base coincide without a change of the underlying graph (see Fig.\ \ref{fig:avg3rpr}bc).
We only exclude the degenerated cases
that neither the three  base anchor points nor the three platform anchor points collapse to a single point. 
This non-degeneracy assumption ensures that the geometric constraints defining the planar parallel manipulator are well posed and that the kinematic equations describe a genuine planar parallel manipulator.

{\bf The disadvantage of this interpretation} is that the averaging theorem does not come for free (see the later given Theorem \ref{thm:setB}) as it is originally stated for bar-joint frameworks (cf.\ Theorem \ref{thm:aver}). 
However, Definition \ref{def:averconf} and Lemma \ref{lem:invariant} are formulated in a way that they are also valid for this interpretation. In context of  Definition \ref{def:averconf} it should be noted that the combinatorial structure of a pin-jointed bar-plate framework can only be violated  by the averaging construction if
either the length of a leg gets zero
or  two legs coincide (i.e.\ same platform  and same base anchor points). 

\subsection{Algebraic Formulation of the Direct Kinematic Problem}\label{sec:direct}

Let $\Vkt x_j=(a_j,b_j)^{\mathrm T}$ for $j=4,5,6$ denote the coordinates of a point $X_j$ on the moving platform with respect to the moving frame. The pose of the platform relative to the fixed frame is described using the Blaschke-Gr\"unwald parameters $(q_0 : q_1 : q_2 : q_3)$, which may be regarded as homogeneous coordinates in the projective space $\mathbb{P}^3$. It is well known that, after excluding the line $q_0=q_1=0$, this parametrisation establishes a bijection with the planar Euclidean motion group $\mathrm{SE}(2)$. 
The transformation of the coordinates of a point $X_j$ from the moving frame to the fixed frame is given by 
\begin{equation}
\begin{pmatrix}
a_j \\ b_j
\end{pmatrix}
\mapsto
\frac{1}{q_0^2 + q_1^2}
\left(
\begin{array}{c@{\hspace{8pt}}c}
q_0^2 - q_1^2 &-2q_0 q_1 \\
2q_0 q_1 & q_0^2 - q_1^2
\end{array}
\right)
\begin{pmatrix}
a_j \\ b_j
\end{pmatrix}
+
\frac{1}{q_0^2 + q_1^2}
\begin{pmatrix}
2q_1 q_2 + 2q_0 q_3 \\
2q_1 q_3 - 2q_0 q_2
\end{pmatrix}.
\label{eq:bg_transform}
\end{equation}
Under consideration of the normalising 
condition 
\begin{equation}
c_0 := q_0^2 + q_1^2 - 1 = 0
\label{eq:normalisation}
\end{equation}
the geometric condition that the platform point $X_j$ lies on a circle of radius $r_i$ centred at the base point $X_i$ with coordinates $\Vkt x_i=(a_i,b_i)^{\mathrm T}$ for $i=j-3$  
with respect to the fixed frame can be expressed by a quadratic constraint $c_i = 0$ with
\begin{align}
c_i :=\;&
2a_i a_j q_1^2 - 2a_i a_j q_0^2 + 4a_i b_j q_0 q_1
- 4b_i a_j q_0 q_1 - 2b_i b_j q_0^2 + 2b_i b_j q_1^2 \notag\\
&+ (a_j^2+b_j^2)(q_0^2+q_1^2)
- 4a_i(q_0 q_3 + q_1 q_2) + 4b_i(q_0 q_2 - q_1 q_3) \notag\\
&+ 4a_j(q_0 q_3 - q_1 q_2) - 4b_j(q_0 q_2 + q_1 q_3)
+ a_i^2 + b_i^2 + 4(q_2^2+q_3^2) - r_i^2 
\label{eq:sphere_constraint}
\end{align}
according to \cite{husty:1994}. 
Besides the geometry of base plate and the platform plate,   the intrinsic metric of the bar-plate framework is completed by the leg lengths $r_i$. Consequently, the realisations $\mathcal{G}(\mathbf{X})$ correspond to the solutions of the system $c_0 = c_1 = c_2 = c_3 = 0$.
It is known that this so-called direct kinematic problem admits at most six solutions. 
Thus configurations with a flexion order of at most 5 are possible (see \cite{husty:2023,jank,stachel:planar,wohlhart}). Note that flexion order 6 would already imply that the configuration has to belong to a continuous motion known as self-motion.

\subsection{Characterisation of Configurations with Flexion Order 1 and 2}\label{sec:order12}

It is well known that configurations with a flexion order of at least 1 can be characterized by the geometric condition that the carrier lines of the three legs belong to a pencil of lines; i.e. they have a common point which can also be an ideal point.
The corresponding algebraic condition can be obtained from the rigidity matrix, which is given by $R_{\mathcal{G}(\mathbf{X})} = (\nabla c_0, \nabla c_1, \nabla c_2, \nabla c_3)$ according to \cite{husty:1994}, 
where the operator $\nabla$ denotes the gradient composed of the partial derivatives with respect to $q_0,\ldots, q_3$.
The variety $V_1$ of configurations with a flexion order of at least 1 is defined as the zero set of 
$$
s := \det\bigl(R_{\mathcal{G}(\mathbf{X})}\bigr).
$$
For the computation of the variety $V_2$  of configurations with a flexion order of at least 2 we 
use the following approach presented in \cite{nawratil:global}. 
The condition $s=0$  also implies that the tangent hyperplanes of the constraint hypersurfaces $c_0,\dots,c_3$ in $\mathbb{P}^3$ intersect in a common line. If this line is again tangent to $V_1$ then the corresponding configuration has at least flexion order 2. 
This is equivalent to the condition $s_0 = s_1 = s_2 = s_3 = 0$
with
$$
\begin{aligned}
s_0 &:= \det(\nabla c_1, \nabla c_2, \nabla c_3, \nabla s), 
&\qquad
s_1 &:= \det(\nabla c_0, \nabla c_2, \nabla c_3, \nabla s), \\[0.5ex]
s_2 &:= \det(\nabla c_0, \nabla c_1, \nabla c_3, \nabla s), 
&\qquad
s_3 &:= \det(\nabla c_0, \nabla c_1, \nabla c_2, \nabla s),
\end{aligned}
$$  
We introduce the second-order ideal
\begin{equation}
\mathcal{I}_2 = \langle s, s_0, s_1, s_2, s_3 \rangle,
\label{eq:ideal}
\end{equation}
whose variety $V_2$ determines configurations with a flexion order of at least 2.

Note that $V_2$ also contains singular points of $V_1$ as they are characterized by the condition $s=\nabla s=0$. 
According to \cite[Theorem 3]{adit}, the variety $V_1$ possesses singularities only for certain special architectural designs, in addition to those arising from the chosen parametrisation, which correspond to the line $q_0 = q_1 = 0$. The only configuration which corresponds to a singularity of $V_1$ without having a leg of zero length is where all six anchor points are collinear.

\subsection{Pairing of Incongruent Realisations}\label{sec:cases}

In general there exists 24 incongruent realisations (over $\CC$) of a non-degenerated pin-jointed bar-plate framework. The pairing of two arbitrary realisations
 $\mathcal{G}(\Vkt X)$ and $\mathcal{G}(\Vkt X')$
out of these 24 possibilities has to fall into  one of the following four pairwise disjunct sets:

\begin{enumerate}[{{Set} A)}]
    \item 
    There exist direct isometries $\alpha$ and $\beta$ of $\RR^2$ with $\alpha(\Vkt x_i)=\Vkt x'_i$ for $i=1,2,3$ and $\beta(\Vkt x_j)=\Vkt x'_j$ for $j=i+3$. 
    \item 
    There exist indirect isometries $\alpha$ and $\beta$ of $\RR^2$ with $\alpha(\Vkt x_i)=\Vkt x'_i$ for $i=1,2,3$ and $\beta(\Vkt x_j)=\Vkt x'_j$ for $j=i+3$. Moreover, neither the platform points nor the base points are collinear. 
     \item 
    There exists a direct isometry $\alpha$ of $\RR^2$  and an indirect isometry $\beta$ of $\RR^2$ with $\alpha(\Vkt x_i)=\Vkt x'_i$ for $i=1,2,3$ and $\beta(\Vkt x_j)=\Vkt x'_j$ for $j=i+3$. Moreover, the platform anchor points are not collinear.
     \item 
    There exists an indirect isometry $\alpha$ of $\RR^2$  and a direct isometry $\beta$ of $\RR^2$ with $\alpha(\Vkt x_i)=\Vkt x'_i$ for $i=1,2,3$ and $\beta(\Vkt x_j)=\Vkt x'_j$ for $j=i+3$. Moreover, the base anchor points are not collinear.
\end{enumerate}
 
As the sets C and D correspond to each other by swapping the roles of the platform and the base, we can restrict to the discussion of the sets A, B and C. In Section \ref{sec:parametrisation}, we parametrize these three sets of pairs of incongruent realisations, where the parameters also include the relative orientation of 
$\mathcal{G}(\Vkt X)$ and $\mathcal{G}(\Vkt X')$.

\section{Parametrizing the Sets of Incongruent 
Realisation-Pairs}\label{sec:parametrisation} 

\subsection{Set A}
Without loss of generality (w.l.o.g.) we can always assume that the two incogruent realisations $\mathcal{G}(\Vkt X)$ and $\mathcal{G}(\Vkt X')$  have the same base anchor points; i.e.\ $\Vkt x_i=\Vkt x'_i$ for $i=1,2,3$. 
Then the three points $\Vkt x_4,\Vkt x_5,\Vkt x_6$ and the three points $\Vkt x'_4,\Vkt x'_5,\Vkt x'_6$ are either related by (I) a rotation or (II) a translation. 

\subsubsection{(I) Rotation} 
As the centre $\Vkt u$ of rotation cannot be identical with all three points $\Vkt x_4,\Vkt x_5,\Vkt x_6$, we can assume w.l.o.g.\ that after a maybe necessary relabelling of the points $\Vkt u\neq \Vkt x_4$ holds true. 
Moreover, we can define the unit length as the distance between these two points. Now we can select our reference frame in a way that $\Vkt u$ is its origin and that $\Vkt x_4$ is located on the $y$-axis, which yields:
$\Vkt u=(0,0)^{\mathrm T}$ and $\Vkt x_4 = (0,1)^{\mathrm T}$.

Then the points $\Vkt x'_j$ for $j=4,5,6$ can be obtained from $\Vkt x_j$ by a rotation about the origin parametrized by the Blaschke-Gr\"unwald parameters $(e_0:e_1:0:0)\neq (1:0:0:0)$. Therefore we can set w.l.o.g.\ $e_1=1$.

 As we have now parametrized our two poses of the platform we remain with the parametrisation of the base points. This has to be done in a way that the leg lengths coincide; i.e. $r_i=r'_i$ for $i=1,2,3$. 

\begin{enumerate}[$i)$]
\item General case; $\Vkt x_{5}\neq \Vkt x'_{5}$ and $\Vkt x_{6}\neq  \Vkt x'_{6}$: 
As in this case  $\Vkt x_{i+3}\neq \Vkt x'_{i+3}$ holds true for  $i=1,2,3$, the base anchor point $\Vkt x_i=\Vkt x'_i$ has to be located on the bisecting line. Using the free parameter $l_i$, the base anchor point $\Vkt x_i$ can be written as
\begin{equation}
\Vkt x_i=\Vkt m_i+l_i\,\Vkt n_i,
\qquad i=1,2,3,
\label{eq:base_on_bisector}
\end{equation}
where $\Vkt m_i$ denotes the midpoint of the segment
$\Vkt x_{i+3}\Vkt x'_{i+3}$ and $\Vkt n_i$ is a direction vector
perpendicular to this segment with
$ \Vkt n_i=(b_{i+3}-b'_{i+3}, a'_{i+3}-a_{i+3})^{\mathrm T}$.

Finally, we apply a rotation about the origin to $\mathcal{G}(\Vkt X)$, parametrized by the Blaschke-Gr\"unwald parameters $(f_0 : f_1 : 0 : 0)$ using Eq.~(\ref{eq:bg_transform}).
Then the averaged configuration $\mathcal{G}(\bar{\Vkt X})$ can be constructed in dependence of the 10 parameters
$e_0, f_0, f_1, a_5, b_5, a_6, b_6, l_1, l_2, l_3$.

\item 
Special case; $\Vkt x_{5}\neq \Vkt x'_{5}$ but $\Vkt x_{6}=  \Vkt x'_{6}$: 
This case has to be discussed separately as for  $\Vkt x_{6}= \Vkt x'_{6}$ the bisecting construction is not defined. But $\Vkt x_{6}= \Vkt x'_{6}$ can only hold if  $\Vkt x_{6}$ is located in the centre of rotation; i.e.\  $\Vkt x_{6}=(0,0)^{\mathrm T}$. 
As the corresponding base anchor point $\Vkt x_3=\Vkt x'_3$ can be located anywhere, the averaged configuration
 $\mathcal{G}(\bar{\Vkt X})$ can then be constructed in dependence of the 9 parameters
$e_0, f_0, f_1, a_3, b_3, a_5, b_5, l_1, l_2$.

\item Very special case; $\Vkt x_{5}= \Vkt x'_{5}$ and $\Vkt x_{6}= \Vkt x'_{6}$:
Similar considerations as in the special case discussed above implies a construction of the averaged configuration
 $\mathcal{G}(\bar{\Vkt X})$  in dependence of the 8 parameters
$e_0, f_0, f_1, a_2, b_2, a_3, b_3, l_1$.

\end{enumerate}

\subsubsection{(II) Translation} 
In this case we can assume w.l.o.g.\ that the translation is done in $x$-direction and that it defines the unit length. Therefore the translation vector equals $(1,0)$  and we can select $\Vkt x_4$ as the origin of our reference frame; i.e.\ $\Vkt x_4 = (0,0)^{\mathrm T}$.

Then the points $\Vkt x'_j$ for $j=4,5,6$ can be obtained from $\Vkt x_j$ by a translation written as 
$(1:0:0:1/2)$ in terms of Blaschke-Gr\"unwald parameters {(\ref{eq:bg_transform}) }.  

As in this case  $\Vkt x_{i+3}\neq \Vkt x'_{i+3}$ holds true for  $i=1,2,3$, the base anchor point $\Vkt x_i=\Vkt x'_i$ has to be located on the bisecting line and we get again Eq.\ (\ref{eq:base_on_bisector}). 

Finally, we apply again a rotation about the origin to $\mathcal{G}(\Vkt X)$, parametrized by the Blaschke-Gr\"unwald parameters $(f_0 : f_1 : 0 : 0)$ using Eq.~(\ref{eq:bg_transform}).
Then the averaged configuration $\mathcal{G}(\bar{\Vkt X})$ can  be constructed in dependence of the 9 parameters
$f_0, f_1, a_5, b_5, a_6, b_6, l_1, l_2, l_3$.

\subsection{Set B} 

Here the parametrisation can be done in a similar manner with the sole difference that we apply to $\mathcal{G}(\Vkt X')$ a reflection on the $x$-axis; which corresponds to the mapping $(a'_k,b'_k)^{\mathrm T}\mapsto (a'_k,-b'_k)^{\mathrm T}$ for $k=1,\ldots, 6$. 

Note that we also have to distinguish the same cases as for the parametrisation of set A, with exception of Case ($I_{iii}$) as the platform points are assumed to be not collinear.

\subsection{Set C} 

Without loss of generality, we can always assume that the two incongruent realisations $\mathcal{G}(\Vkt X)$ and $\mathcal{G}(\Vkt X')$  have the same base anchor points; i.e., $\Vkt x_i=\Vkt x'_i$ for $i=1,2,3$. 
 
Then the three points $\Vkt x_4,\Vkt x_5,\Vkt x_6$ and the three points $\Vkt x'_4,\Vkt x'_5,\Vkt x'_6$ are either related by (I) a glide-reflection or (II) a pure reflection.

\subsubsection{(I) Glide-reflection}\label{sec:glide}

As not all three points of the triangle $\Vkt x_4,\Vkt x_5,\Vkt x_6$ can be located on the axis $\ell$ of the glide-reflection, we can assume w.l.o.g.\ that after a maybe necessary relabelling of the points $\Vkt x_4\notin\ell$ holds true. 
Moreover, we can define the unit length as the distance between $\Vkt x_4$ and its pedal point $\Vkt u$ on $\ell$. Now we can select our reference frame in a way that $\Vkt u$ is its origin, $\ell$ equals the $x$-axis and that $\Vkt x_4$ is located on the positive $y$-axis which yields:
$\Vkt u=(0,0)^{\mathrm T}$ and $\Vkt x_4 = (0,1)^{\mathrm T}$.

Then the points $\Vkt x'_j$ for $j=4,5,6$ can be obtained from $\Vkt x_j$ by the glide-reflection 
$(a'_j,b'_j)^{\mathrm T}= (a_j+d,-b_j)^{\mathrm T}$ where $d\neq 0$ is the translation distance along $\ell$.
As in this case  $\Vkt x_{i+3}\neq \Vkt x'_{i+3}$ holds true for  $i=1,2,3$, the base anchor point $\Vkt x_i=\Vkt x'_i$ has to be located on the bisecting line and we get again Eq.\ (\ref{eq:base_on_bisector}). 

Finally, we apply again a rotation about the origin to $\mathcal{G}(\Vkt X)$, parametrized by the Blaschke-Gr\"unwald parameters $(f_0 : f_1 : 0 : 0)$ using Eq.~(\ref{eq:bg_transform}).
Then the averaged configuration $\mathcal{G}(\bar{\Vkt X})$ can  be constructed in dependence of the 10 parameters
$d, f_0, f_1, a_5, b_5, a_6, b_6, l_1, l_2, l_3$.

\subsubsection{(II) Reflection}
We have to distinguish the following three subcases: 
\begin{enumerate}[$i)$]
\item General case; $\Vkt x_{5}\neq \Vkt x'_{5}$ and $\Vkt x_{6}\neq  \Vkt x'_{6}$:  
The parametrisation of this case can be obtained from the one of the glide-reflection by setting $d=0$. Therefore the averaged configuration $\mathcal{G}(\bar{\Vkt X})$ can  be constructed in dependence of the 9 parameters
$f_0, f_1, a_5, b_5, a_6, b_6, l_1, l_2, l_3$.

\item Special case; $\Vkt x_{5}\neq  \Vkt x'_{5}$ but $\Vkt x_{6}=  \Vkt x'_{6}$: 
This case has to be discussed separately as for  $\Vkt x_{6}= \Vkt x'_{6}$ the bisecting construction is not defined. But $\Vkt x_{6}= \Vkt x'_{6}$ can only hold if  $\Vkt x_{6}$ is located on the axis $\ell$ of the reflection; i.e.\ $\Vkt x_{6}=(a_6,0)^{\mathrm T}$. 
As the corresponding base anchor point $\Vkt x_3=\Vkt x'_3$ can be located anywhere, the averaged configuration
 $\mathcal{G}(\bar{\Vkt X})$ can then be constructed in dependence of the 9 parameters
$f_0, f_1, a_3, b_3, a_5, b_5, a_6, l_1, l_2$. 
\item 
Very special case; $\Vkt x_{5}= \Vkt x'_{5}$ and $\Vkt x_{6}= \Vkt x'_{6}$: 
Similar considerations as in the special case discussed above implies a construction of the averaged configuration
 $\mathcal{G}(\bar{\Vkt X})$  in dependence of the 9 parameters
$f_0, f_1, a_2, b_2, a_3, b_3, a_5, a_6, l_1$.
\end{enumerate}

\section{Results}\label{sec:results}

Using the parametrisation of the sets A, B and C of pairs of incongruent realisations given in Section \ref{sec:parametrisation}, we can formulate the following theorems for an averaged configuration $\mathcal{G}(\bVkt X)$ (cf.\ Definition \ref{def:averconf}) 
of two incongruent realisations 
$\mathcal{G}(\Vkt X)$ and $\mathcal{G}(\Vkt X')$ of a pin-jointed bar-plate framework, which represents a non-degenerated planar $3$-RPR  parallel manipulator with locked P joints.

\begin{thm}\label{thm:setB}
If the pair $\mathcal{G}(\Vkt X)$ and $\mathcal{G}(\Vkt X')$ belongs to the set B, then
    $\mathcal{G}(\bVkt X)$ is in general first order rigid (flexion order 0).
    $\mathcal{G}(\bVkt X)$ has a flexion order of 1 in case ($\Gamma$) iff the polynomial $f_0A_{\Gamma} + f_1B_{\Gamma}$ vanishes for $\Gamma=I_i,I_{ii},II$ with:
\begin{align*}
    A_{I_i}= \,\,&
(e_0-2e_1l_2 )(a_5e_0 - b_5e_1) ( l_1 - l_3 )
-
(e_0-2e_1l_3)(a_6e_0 - b_6e_1) ( l_1 - l_2 )
+\\
&(e_0- 2 e_1 l_1)e_1 ( l_2 - l_3 ) \\
B_{I_i}= \,\,&
(e_0-2e_1l_3)(a_6e_1 + b_6e_0) ( l_1 - l_2 ) 
-(e_0-2e_1l_2)(a_5e_1 + b_5e_0) ( l_1 - l_3 )
+\\
&(e_0- 2 e_1 l_1)e_0 ( l_2 - l_3 )\\
A_{I_{ii}} = \,\,&(e_0 - 2 e_1 l_2)(a_5 e_0 - b_5 e_1) + (e_0 - 2 e_1 l_1)e_1 \\
B_{I_{ii}} = \,\,&(e_0 - 2 e_1 l_1)e_0-(e_0-2 e_1 l_2)(a_5 e_1 + b_5 e_0)
       \\
A_{II} = \,\,&a_5(l_1 - l_3) - a_6(l_1 - l_2) \\
B_{II} = \,\,&(l_2 - l_3)-b_5(l_1 - l_3) + b_6(l_1 - l_2) 
\end{align*}

\end{thm}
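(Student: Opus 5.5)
We follow the approach already set up in Sections \ref{sec:direct}--\ref{sec:parametrisation}: compute the averaged configuration symbolically and evaluate the singularity polynomial $s=\det(R_{\mathcal{G}(\bVkt X)})$ on it. For each case $\Gamma\in\{I_i,I_{ii},II\}$ of Set B we use the parametrisation of Section \ref{sec:parametrisation}. The reflection $(a'_k,b'_k)^{\mathrm T}\mapsto(a'_k,-b'_k)^{\mathrm T}$ is applied to $\mathcal{G}(\Vkt X')$, and the additional rotation $(f_0:f_1:0:0)$ is applied to $\mathcal{G}(\Vkt X)$. From this we form the midpoints $\bVkt x_k$, $k=1,\ldots,6$. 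We then express the averaged base points and the averaged platform points in the fixed frame. The averaged platform is a rigid triangle, so we may read off its moving coordinates and its Blaschke--Gr\"unwald pose directly; by Lemma \ref{lem:invariant} the translational part may be fixed arbitrarily.

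Rather than expanding the $4\times4$ determinant blindly, we use the geometric characterisation recalled in Section \ref{sec:order12}. The configuration has flexion order $\geq1$ iff the three carrier lines $\bVkt x_i\bVkt x_{i+3}$ belong to a pencil. Algebraically, we write each line in homogeneous coordinates as $\Vkt g_i=(\bVkt x_i,1)\times(\bVkt x_{i+3},1)$ and take $D:=\det(\Vkt g_1,\Vkt g_2,\Vkt g_3)$, which agrees with $s$ up to nonvanishing factors. We substitute the parametrisation and factor $D$ with a computer algebra system. We expect it to split as a product of harmless factors and the linear form $f_0A_\Gamma+f_1B_\Gamma$ in the rotation parameters. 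The harmless factors include powers of $(f_0^2+f_1^2)$, $(e_0^2+e_1^2)$, and factors whose vanishing corresponds to a degenerate or zero-length leg, which is excluded by Definition \ref{def:averconf} and by non-degeneracy. Linearity in $(f_0,f_1)$ is plausible: a rotation of $\mathcal{G}(\Vkt X)$ by angle $2\varphi$ changes the averaged configuration only through terms in $\cos\varphi$ and $\sin\varphi$ after clearing the common factor $f_0^2+f_1^2$. We verify that, after removing these factors, the remaining factor coincides with the stated $A_\Gamma$ and $B_\Gamma$ in each case. Case $II$ is computed analogously, with the translation $(1:0:0:1/2)$ in place of the rotation.

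For the claim ``in general flexion order 0'', it suffices to show that $f_0A_\Gamma+f_1B_\Gamma$ is not the zero polynomial. Exhibiting one parameter value (e.g.\ $l_1\neq l_2=l_3$ suitably chosen) where it is nonzero does this, so $s\neq0$ generically. This contrasts with Theorem \ref{thm:aver}: the bar-joint averaging argument breaks down here because of the reflection. For ``flexion order exactly 1 iff'', we additionally check that on the hypersurface $f_0A_\Gamma+f_1B_\Gamma=0$ the ideal $\mathcal{I}_2$ of Eq.~(\ref{eq:ideal}) does not vanish identically. Again a single explicit example suffices, showing that $V_2$ is not forced.

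The main obstacle is the factorisation step: identifying which extra factors of $D$ (or $s$) are geometrically spurious and justifying their exclusion. Examples are factors that vanish when an averaged leg degenerates, when two legs coincide, or when the platform or base points become collinear, the latter being excluded in Set B. To handle this, we first look for such factors by specialising parameters numerically, and then match each one to a degeneracy excluded by the hypotheses.
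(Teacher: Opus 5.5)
Your proposal follows the paper's proof. The paper likewise evaluates the first-order singularity polynomial $s$ on the parametrised averaged configuration at the pose $(1:0:0:0)$, where your line-coordinate determinant $D$ is just a nonzero constant multiple of $s$. It factors $s$, discards $e_1$, $f_0^2+f_1^2$, $e_0^2+e_1^2$ and the factors that give a zero-length averaged leg, and is left with $f_0A_\Gamma+f_1B_\Gamma$.

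Your extra check that $\mathcal{I}_2$ does not vanish identically on this hypersurface goes beyond the paper, which only exhibits examples. One caution: in Set B the averaged base and averaged platform are always collinear (Lemma~\ref{lem:coll}), so the only collinearity you may discard as a degeneracy is that of the original triangles.
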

Therefore in all cases there exist in general a special relative orientation $f_0:f_1$ illustrated in Fig.\ \ref{fig:setB}, which implies that $\mathcal{G}(\bVkt X)$ has flexion order 1. 
Note that Theorem \ref{thm:setB} does not conflict with the averaging theorem (cf.\ Theorem \ref{thm:aver}), as we are considering a pin-jointed bar-plate framework.
\begin{thm}\label{thm:setC}
If the pair $\mathcal{G}(\Vkt X)$ and $\mathcal{G}(\Vkt X')$ belongs to the set C, then
\begin{enumerate}
    \item 
    $\mathcal{G}(\bVkt X)$ is in general first order rigid (flexion order 0) for the cases ($I$) and ($II_{ii}i$). $\mathcal{G}(\bVkt X)$ has a flexion order of 1 iff $f_0=0$ holds for Case (I) and $f_0b_2b_3=0$ for  Case $(II_{iii})$.
    \item
    $\mathcal{G}(\bVkt X)$ is at least of flexion order 1 for the Cases ($II_i$) and ($II_ii$). For Case ($II_i$) all six points are collinear which corresponds to a singular point of $V_1$. Assume 
    $\mathcal{G}(\bVkt X)$ does not correspond to a singular point of $V_1$, then $\mathcal{G}(\bVkt X)$ has a flexion order of 2 in Case ($II_{ii}$) iff the following polynomial $P_{II_{ii}}$ vanishes:
     \begin{align*}
    (f_1+2f_0l_2)(2f_1l_1 - f_0a_6^2)b_5  
    + 4b_5f_0f_1(l_1 - l_2)a_6+ \\
    (f_1+2f_0l_1)[f_0(a_5-a_6)^2
    - 2f_1l_2b_5^2 
    - a_5b_5(f_1-2f_0l_2)]
    \end{align*}
\end{enumerate}
\end{thm}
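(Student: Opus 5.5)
The plan is to treat each of the four subcases of Set C separately, using the parametrisations of Section~\ref{sec:parametrisation}. For each subcase we build $\mathcal{G}(\bVkt X)$ explicitly:
\begin{itemize}
\item[] the averaged base points are $\bVkt x_i=\Vkt x_i$, since base points coincide;
\item[] the averaged platform points are $\bVkt x_j=\tfrac12(R(f_0,f_1)\Vkt x_j+\Vkt x'_j)$, where $R(f_0,f_1)$ is the rotation from Eq.~(\ref{eq:bg_transform}) and $\Vkt x'_j=(a_j+d,-b_j)^{\mathrm T}$.
\end{itemize}
We then read off the platform coordinates $(\bar a_j,\bar b_j)$ in a moving frame together with the base coordinates. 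The averaged configuration itself is taken as the identity pose $(1:0:0:0)$, with the leg lengths $r_i$ set to its actual leg lengths. By Lemma~\ref{lem:invariant} the translational part is irrelevant, so only $f_0:f_1$ remains as the orientation parameter.

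\textbf{Step 1: flexion order at least 1.} We evaluate $s=\det(R_{\mathcal{G}(\mathbf{X})})$ at this pose; equivalently, we test whether the three leg lines $\bVkt x_i\bVkt x_{i+3}$ belong to a pencil. Writing the leg lines in homogeneous coordinates, the condition becomes a $3\times3$ determinant in the parameters, which we factor.
\begin{itemize}
\item[] In Case (I) and Case ($II_{iii}$) we expect the determinant to factor as $f_0$, respectively $f_0b_2b_3$, times factors that are nonzero under the non-degeneracy assumptions, namely $d\neq0$, the platform points not collinear, and leg lengths nonzero. This gives ``in general first order rigid'' and the stated iff.
\item[] In Case ($II_i$) we expect $s$ to vanish identically. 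The geometric reason is that the base points lie on the bisectors of $\Vkt x_j\Vkt x'_j$, and these bisectors are all parallel to the reflection axis for a pure reflection. After averaging and rotating, all six points then become collinear. We will verify this directly from the coordinates: $\bVkt x_i=\Vkt m_i+l_i\Vkt n_i$ with $\Vkt n_i$ horizontal.
\item[] In Case ($II_{ii}$) we likewise expect $s$ to vanish identically, via a computation of the same kind.
\end{itemize}

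\textbf{Step 2: flexion order 2 in Case ($II_{ii}$).} We check membership in $V_2$ using the ideal $\mathcal{I}_2$ of Eq.~(\ref{eq:ideal}). Since $s=0$ holds identically and we assume a nonsingular point of $V_1$, so that $\nabla s\neq0$, the rank of the gradient system is exactly 3. The conditions $s_0=\dots=s_3=0$ then reduce to one independent condition: $\nabla s$ must lie in the span of $\nabla c_0,\dots,\nabla c_3$, which has corank one. Concretely, we take the kernel vector (the common tangent line direction) $\Vkt v$ of $R^{\mathrm T}$ at the identity pose and impose $\nabla s\cdot\Vkt v=0$. Substituting the parametrised coordinates and clearing denominators should yield $P_{II_{ii}}$ times nonvanishing factors. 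We identify those factors, such as powers of $f_0^2+f_1^2$ or factors that correspond to degenerate legs, and discard them.

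\textbf{Main obstacle.} The hardest part is the symbolic elimination in Step 2. $\nabla s$ is a cubic expression in many parameters, and isolating the essential factor $P_{II_{ii}}$ from spurious ones requires careful factorisation. The same care is needed to justify that the discarded factors correspond only to excluded degenerate situations: a zero leg, coinciding legs, or singular points of $V_1$, i.e. six collinear points. To reduce swell, we will first exploit the normalisations $\Vkt x_4=(0,1)^{\mathrm T}$ and $\Vkt x_6=(a_6,0)^{\mathrm T}$. We will also evaluate the derivatives at the identity pose before substituting the parameters, rather than computing $s$ generically.
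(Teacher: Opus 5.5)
\textbf{Gap: the averaged configuration is built incorrectly.} You take $\bVkt x_i=\Vkt x_i$ for the base, and everything downstream depends on this. In the paper's parametrisation the relative orientation $(f_0:f_1)$ is a rotation $R$ about the origin (Eq.~(\ref{eq:bg_transform})) applied to the \emph{whole} realisation $\mathcal{G}(\Vkt X)$, base included. The base points of the two realisations coincide only \emph{before} this rotation, so the averaged base points are $\bVkt x_i=\tfrac12(R+I)\Vkt x_i$, not $\Vkt x_i$.

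Your choice amounts to averaging $\mathcal{G}(\Vkt X')$ with $(\Vkt x_1,\Vkt x_2,\Vkt x_3,R\Vkt x_4,R\Vkt x_5,R\Vkt x_6)$. That tuple is not a realisation of the framework, since the leg lengths change, so your object is not an averaged configuration. As a result, none of the factorisations you expect will actually appear.
\begin{itemize}
\item[] Case (I): the factor $f_0$ exists because for $(f_0:f_1)=(0:1)$, i.e.\ $R=-I$, the averaged base $\tfrac12(R+I)\Vkt x_i$ collapses to the origin and the three legs become concurrent. With your fixed base the legs are in general not concurrent.
\item[] Cases ($II_i$) and ($II_{ii}$): here $\Vkt x'_j=\sigma\Vkt x_j$, with $\sigma$ the reflection in the $x$-axis. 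The identity $s\equiv 0$ comes from $\tfrac12(R+\sigma)=\tfrac12(R\sigma+I)\sigma$, which is $\sigma$ followed by the orthogonal projection onto the axis of the reflection $R\sigma$. All averaged platform points lie on that axis. Every base point on the $x$-axis satisfies $\sigma\Vkt x_i=\Vkt x_i$, so $\tfrac12(R+I)\Vkt x_i$ lands on the same line. This gives six collinear points in ($II_i$), and collinear $\bVkt x_1,\bVkt x_2,\bVkt x_4,\bVkt x_5,\bVkt x_6$ in ($II_{ii}$), so legs 1 and 2 share a carrier line.
\item[] Your version of these cases: the base stays on the $x$-axis while the averaged platform lies on the axis of $R\sigma$, which is tilted by half the rotation angle. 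So the six points are not collinear, $s\not\equiv 0$, and your Step 2 would not produce $P_{II_{ii}}$.
\end{itemize}

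\textbf{After the fix.} Once you use $\bVkt x_i=\tfrac12(R+I)\Vkt x_i$, your Step 1 coincides with the paper's proof. You evaluate $s$ at $(1:0:0:0)$ with $r_i$ equal to the actual leg lengths, factor it, and discard factors that encode excluded degenerations: $d$, the zero-leg factors $2f_0l_i+f_1$, and collinearity of the platform points.

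Your Step 2 is a legitimate alternative to the paper's route, which takes the gcd of the generators and then runs the Maple check $\mathcal I_{\mathrm{sing}}\subset\mathcal I_2^*$. Since $\nabla s\neq 0$ forces $\nabla c_0,\dots,\nabla c_3$ to span exactly a 3-space, $s_0=\dots=s_3=0$ is equivalent to the single condition $\nabla s\cdot\Vkt v=0$, for $\Vkt v$ orthogonal to all $\nabla c_k$. This makes the reduction to one polynomial transparent. You would still have to treat separately the parameter values at which your symbolic (cofactor) expression for $\Vkt v$ vanishes, because there the condition holds trivially.
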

 Note that $(f_0:f_1)=(0:1)$ implies that the base of $\mathcal{G}(\bVkt X)$ degenerate\footnote{We have not excluded the degenerated cases for  $\mathcal{G}(\bVkt X)$, as this degeneration does not violate the combinatorial structure of a pin-jointed bar-plate framework.} to a point. For $b_k=0$ in ($II_{iii}$) the first leg and the $k$th leg are collinear for $k=2,3$. 
 The vanishing of the remaining factor of Case ($II_{ii}$) is illustrated in Fig.\ \ref{fig:setCA}a.

\begin{thm}\label{thm:setA}
If the pair $\mathcal{G}(\Vkt X)$ and $\mathcal{G}(\Vkt X')$ belongs to the set A, then
    $\mathcal{G}(\bVkt X)$ is at least of flexion order 1. 
    Assume    $\mathcal{G}(\bVkt X)$ does not correspond to a singular point of $V_1$, then $\mathcal{G}(\bVkt X)$
    has a flexion order of 2 
    iff the following polynomial vanishes for:
     \begin{align}
     &\text{Case ($I_i$)} &\quad &f_0(f_0 e_0 + f_1e_1 )P_{I_i} \label{eq:1}\\
      &\text{Case ($I_{ii}$)} &\quad &  f_0(f_0 e_0 + f_1e_1 ) P_{I_{ii}} \label{eq:2} \\
       &\text{Case ($I_{iii}$)} &\quad & f_0(e_0 f_0 + e_1 f_1) (a_2 b_3 - a_3 b_2)(e_0 - 2 e_1 l_1) \label{eq:3}\\
       &\text{Case (II)} &\quad &f_0 P_{II} \label{eq:4}
     \end{align}
with 
\[
\begin{aligned}
P_{I_i}
=\,\, &(e_0 - 2e_1 l_2)\, a_6\, (a_5^2 + b_5^2)\, (f_1 + 2l_1 f_0)(f_1 + 2l_3 f_0)- \\
&(e_0 - 2e_1 l_3)\, a_5\, (a_6^2 + b_6^2)\, (f_1 + 2l_1 f_0)(f_1 + 2l_2 f_0)+ \\
&(e_0 - 2e_1 l_1)\, (a_5 b_6 - a_6 b_5)\, (f_1 + 2l_2 f_0)(f_1 + 2l_3 f_0)
\\[1em]
P_{I_{ii}} =\,\, &(e_0 - 2 e_1 l_2)(a_5^2 + b_5^2)(a_3 e_0 + b_3 e_1)(f_1 + 2 f_0 l_1)- \\
& (e_0 - 2 e_1 l_1)\big[ (a_3 b_5 - a_5 b_3)e_0
    + (a_3 a_5 + b_3 b_5)e_1 \big](f_1 + 2 f_0 l_2)
\\[1em]
P_{II} =\,\,
&a_5(l_1 - l_3)(f_1+2f_0l_2)
-a_6(l_1 - l_2)(f_1+2f_0l_3)
\end{aligned}
\]
\end{thm}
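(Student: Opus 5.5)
We prove Theorem~\ref{thm:setA} by direct symbolic computation, using the parametrisations of Set A from Section~\ref{sec:parametrisation}. The four Cases ($I_i$), ($I_{ii}$), ($I_{iii}$) and (II) are treated separately but with the same pipeline.

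\textbf{Step 1 (averaged configuration).} For each case we write down the averaged configuration $\mathcal{G}(\bVkt X)$ explicitly.
\begin{itemize}
\item The platform points of $\mathcal{G}(\Vkt X)$ are $\Vkt x_4=(0,1)^{\mathrm T},\Vkt x_5,\Vkt x_6$. Their images under the rotation $(e_0:1:0:0)$, or the translation $(1:0:0:1/2)$ in Case (II), give $\Vkt x'_j$.
\item The base points are $\Vkt x_i=\Vkt m_i+l_i\Vkt n_i$ by Eq.~(\ref{eq:base_on_bisector}). In the special cases the base points attached to fixed platform points are instead free coordinates $(a_i,b_i)$.
\item The additional rotation $(f_0:f_1:0:0)$ is applied to $\mathcal{G}(\Vkt X)$ only, via Eq.~(\ref{eq:bg_transform}).
\end{itemize}
Averaging then gives $\bVkt x_k$, $k=1,\dots,6$, as rational functions of the parameters. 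From these we express the averaged platform in its own moving frame and the averaged base in the fixed frame. This is legitimate by Lemma~\ref{lem:invariant}, and the averaged pose is taken as the identity $(1:0:0:0)$.

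\textbf{Step 2 (flexion order at least 1).} We substitute into $s=\det(R_{\mathcal{G}(\mathbf X)})$ and verify that $s$ vanishes identically. Geometrically this is the statement that the three averaged leg lines are concurrent.

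A conceptual argument should make this transparent. In Set A both realisations arise from the same plates moved by direct motions $\alpha$ and $\beta$. For a single leg $i$, $\|\Vkt x_{i+3}-\Vkt x_i\|^2=\|\Vkt x'_{i+3}-\Vkt x'_i\|^2$. Writing differences and sums, this gives $(\Vkt x_{i+3}-\Vkt x'_{i+3}-\Vkt x_i+\Vkt x'_i)\cdot(\bVkt x_{i+3}-\bVkt x_i)=0$. The half-differences are the velocity field of an infinitesimal planar motion of the platform relative to the base, because the average of two direct isometries is an affine map whose deviation is again a ``rotation-type'' field; for the bar-plate framework this requires the rotational parts to be compatible. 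So every averaged leg is orthogonal to the velocity of an infinitesimal motion, hence passes through its pole. This is exactly where Set A differs from Sets B and C.

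Rather than rely on that argument, I would simply confirm $s\equiv0$ symbolically in all four cases.

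\textbf{Step 3 (flexion order 2).} With $s\equiv0$, the configuration has order at least 2 iff $s_0=\dots=s_3=0$, by the characterisation of $V_2$ via $\mathcal I_2$ in Section~\ref{sec:order12}. The point must be a regular point of $V_1$, which is why singular points are excluded.

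Since the tangent hyperplanes of $c_0,\dots,c_3$ already share a line, the four $s_j$ are proportional. Hence it suffices to compute one nonvanishing $s_j$, say $s_3$ or $s_0$, factor it, and check that the others differ by nonzero factors. Nonzero here means not vanishing under the non-degeneracy assumptions: nonzero leg lengths, and no collapse of a plate to a point.

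The factorisation should produce $f_0$, $(f_0e_0+f_1e_1)$ and the polynomials $P_{I_i}$, $P_{I_{ii}}$, $P_{II}$, or $(a_2b_3-a_3b_2)(e_0-2e_1l_1)$ in Case ($I_{iii}$), together with spurious factors. Each spurious factor must be shown to correspond either to a violation of the combinatorial structure (zero leg length, coinciding legs) or to a singular point of $V_1$ (all six points collinear), and then discarded.

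\textbf{Main obstacle.} The hard part is Step 3. The determinants $s_j$ are large rational expressions in 8–10 parameters. Isolating the stated factors, and rigorously justifying that every other factor is excluded, will be the delicate part of the argument. To keep this manageable, I would first clear denominators $(f_0^2+f_1^2)$, $(e_0^2+1)$, and only then factor with a computer algebra system.
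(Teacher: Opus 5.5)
\paragraph{Where the gap is.} Your Steps~1 and~2 follow the paper's pipeline: the averaged configuration from the Section~\ref{sec:parametrisation} parametrisation, evaluated at the pose $(1:0:0:0)$, with $s\equiv 0$. Your Cayley-type argument for $s\equiv0$ is a correct optional addition. The gap is in Step~3, where you claim that one $s_j$ suffices. The proportionality itself holds pointwise on regular points of $V_1$. If $\sum_j\lambda_j\nabla c_j=0$ is the linear relation among the gradients and $\Vkt w$ spans the common line of the tangent hyperplanes, then, up to the indexing convention, $s_j=\pm\kappa\,\lambda_j\,(\nabla s\cdot\Vkt w)$. The trouble is that the proportionality factors $\lambda_j$ vanish at admissible configurations.

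At the identity pose $\nabla c_0=(2,0,0,0)$, and the last three entries of $\nabla c_i$ are line coordinates of leg $i$. Hence $s_3=\det(\nabla c_0,\nabla c_1,\nabla c_2,\nabla s)$ vanishes whenever legs 1 and 2 lie on a common line, regardless of $\nabla s$. Such configurations have nonzero leg lengths and non-collapsed plates. They are in general regular points of $V_1$, and in general the other $s_k$ do not vanish there, so they are not of flexion order 2. Similarly, $\lambda_0$ depends on the choice of origin, so $s_0$ carries a coordinate-dependent extra factor. Your check that ``the others differ by nonzero factors'' will therefore fail. Your discard criterion (zero leg, collapsed plate, or singular point of $V_1$) also does not apply to these factors. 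A factor occurring in only some $s_j$ must be rejected because the remaining generators do not vanish there, not because it is degenerate. As stated, your plan would either put such factors into the ``iff'' or leave their exclusion unjustified.

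\paragraph{How to repair it.} Follow the paper's route. Compute $P=\gcd$ of all generators of $\mathcal I_2$ and read the factors $f_0$, $f_0e_0+f_1e_1$, $P_{I_i}$, $P_{I_{ii}}$, $(a_2b_3-a_3b_2)(e_0-2e_1l_1)$ and $P_{II}$ off $P$ alone. For the ``only if'' direction you must also control common zeros of all generators that lie off $V(P)$, for example where the extra factors of different $s_j$ intersect. The paper does this by forming the square-free reduced generators $s^*=\dach{s}/\dach{P}$ and $s_j^*=\dach{s_j}/\dach{P}$. It then verifies the containment $\langle s,\nabla s\rangle\subseteq\mathcal I_2^*$ in Maple, which gives $V(\mathcal I_2^*)\subseteq V_{\mathrm{sing}}$. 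This residual step is missing from your plan and is what makes the exclusion of singular points of $V_1$ in the statement sufficient.
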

 Note that $(f_0:f_1)=(0:1)$ implies again the degeneration of the base to a point.  For $(f_0:f_1)=(-e_1:e_0)$ the platform degenerates into a point. If in Case ($I_{iii}$) $e_0 = 2 e_1 l_1$ holds true, then the platform can rotate about the point $\Vkt x_1=\Vkt x_5=\Vkt x_6$, which also holds true for the averaged framework; i.e.\ $\bVkt x_1=\bVkt x_5=\bVkt x_6$. 
 If in Case ($I_{iii}$) $a_2 b_3 = a_3 b_2$ holds true ($\Rightarrow$ $\Vkt x_5=\Vkt x_6, \Vkt x_2, \Vkt x_3$ are collinear) then the averaged configuration has flexion order 2 independently of the chosen orientation $(f_0:f_1)$. This case is illustrated in Fig.\ \ref{fig:setCA}b and the vanishing of the polynomials $P_{I_i}$, $P_{I_{ii}}$ and $P_{II}$ is visualized in \medskip Fig.\ \ref{fig:setA}.

The detailed proofs of the Theorems \ref{thm:setB}--\ref{thm:setA} are given in the corresponding Appendices \ref{sec:proofA}--\ref{sec:proofC}. In the following we only outline the computational pipeline of these proofs: 

Using the parametrisation of the sets A, B and C of pairs of incongruent realisations given in Section \ref{sec:parametrisation}, we compute the average configuration $\mathcal{G}(\bVkt X)$, which is used for entering the approach described in the Subsections \ref{sec:direct} and \ref{sec:order12}. By setting the leg lengths $r_i=\|\bVkt x_i-\bVkt x_{i+3}\|$ for $i=1,2,3$ the considered configuration  $\mathcal{G}(\bVkt X)$ corresponds to the solution of the direct kinematic problem given by 
 $(q_0:\ldots:q_3)=(1:0:0:0)$. 
Therefore we have to evaluate the polynomials $s,s_0,\ldots,s_3$ at this pose. 
From the resulting expressions we cancel out all factors implying a leg of length zero, as this would violate the combinatorial structure of $\mathcal{G}(\bVkt X)$. 

The polynomials given in the Theorems \ref{thm:setC} and \ref{thm:setA} which imply a configuration of flexion order 2 belong to the greatest common divisor (gcd) of the generators $s,s_0,\ldots, s_3$ of $\mathcal{I}_2$ given in Eq.\ (\ref{eq:ideal}). By deleting all factors of $s$ and $s_i$ which also show up in this gcd, we obtain the reduced polynomials $s^*$ and $s_i^*$. It can be checked (via Maple) that the variety of $\mathcal{I}_2^*=\langle s^*, s_0^*, s_1^*, s_2^*, s_3^* \rangle$ is a subset of the variety of singular points of $V_1$ determined by the five equations $s=\nabla s=0$. 
\begin{remark}
    Flexion order 2 was geometrically characterized by Stachel \cite{stachel:planar} for the bar-joint framework interpretation of planar parallel manipulators. We verified our results obtained for $\mathcal{G}(\bVkt X)$ with flexion order 2 illustrated in Figs.\ \ref{fig:setA} and \ref{fig:setCA} by checking for this geometric property (for details see Figs.\ \ref{fig:stachel}, \ref{fig:stacheltest1}, \ref{fig:setAstachelcheck2}, \ref{fig:trancheck}). \hfill $\diamond$
\end{remark}

\begin{figure}[t]
\begin{center} 
  \begin{overpic}
   [width=75mm]{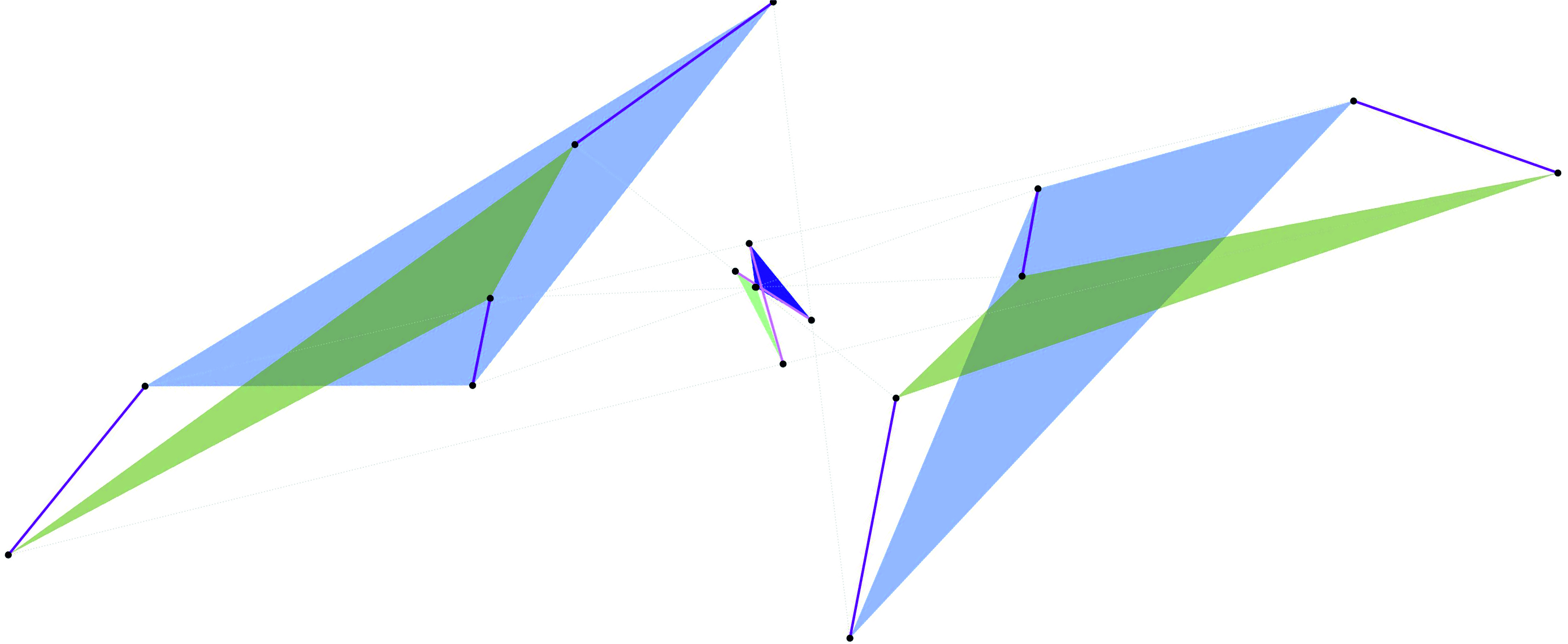}
   \begin{scriptsize}
   \put(-4,0){a)} 
\put(52,18){$\bVkt x_2$}   
\put(47,27){$\bVkt x_3$}
\put(50,23){$\bVkt x_1$}
\put(42,24){$\bVkt x_5$}
\put(47,15){$\bVkt x_6$}
\put(43,21){$\bVkt x_4$}
\end{scriptsize}
\end{overpic} 
\begin{overpic}
   [width=75mm]{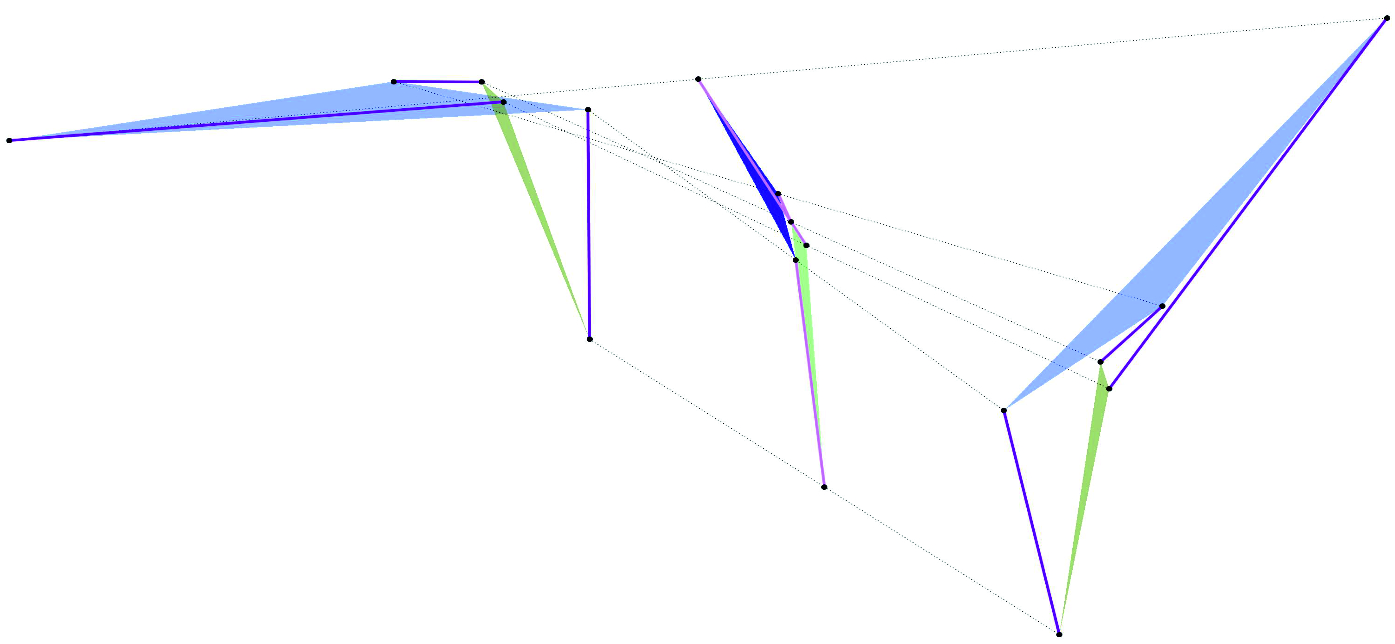}
   \begin{scriptsize}
\put(-4,0){b)}
\put(53,26){$\bVkt x_2$}   
\put(48,42){$\bVkt x_1$}
\put(56,33){$\bVkt x_3$}
\put(58,8){$\bVkt x_5$}
\put(59,27){$\bVkt x_4$}
\put(58,30){$\bVkt x_6$}
\end{scriptsize}
\end{overpic} 
\begin{overpic}
   [width=75mm]{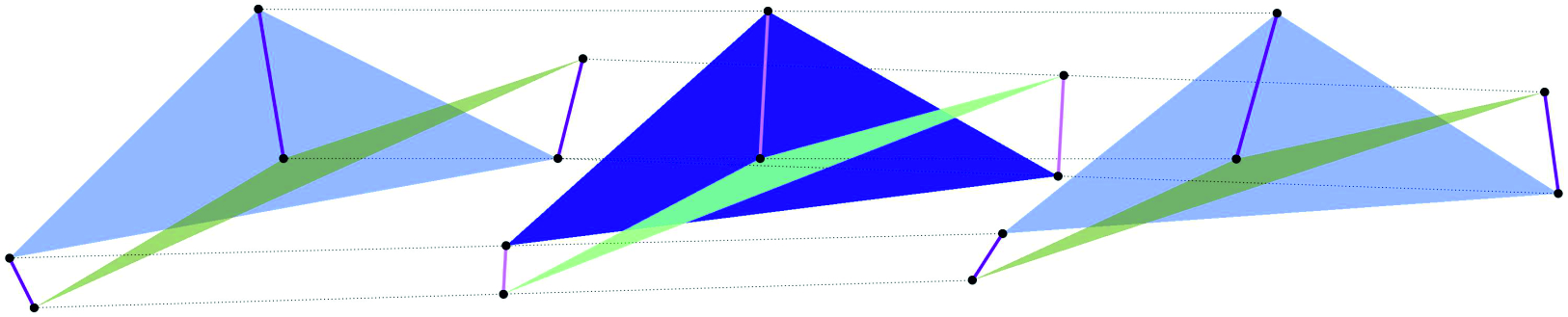}
   \begin{scriptsize}
\put(-4,0){c)}
\put(28,5){$\bVkt x_1$}   
\put(64,11){$\bVkt x_2$}
\put(45,20){$\bVkt x_3$}
\put(28,1){$\bVkt x_4$}
\put(64,16){$\bVkt x_5$}
 \put(44,11){\contour{white}{$\bVkt x_6$}}
\end{scriptsize}
\end{overpic} 
\end{center}	
\caption{Illustration of Theorem \ref{thm:setA}: 
    (a) Solution of Case ($I_i$); for the other one see Fig.\ \ref {fig:avg3rpr}a. (b) Case ($I_{ii}$), (c) In Case ($II$) the three legs of $\mathcal{G}(\bVkt X)$ are always parallel.}
\label{fig:setA}
\end{figure} 

\begin{figure}[t]
\begin{center}
\begin{overpic}
   [width=70mm]{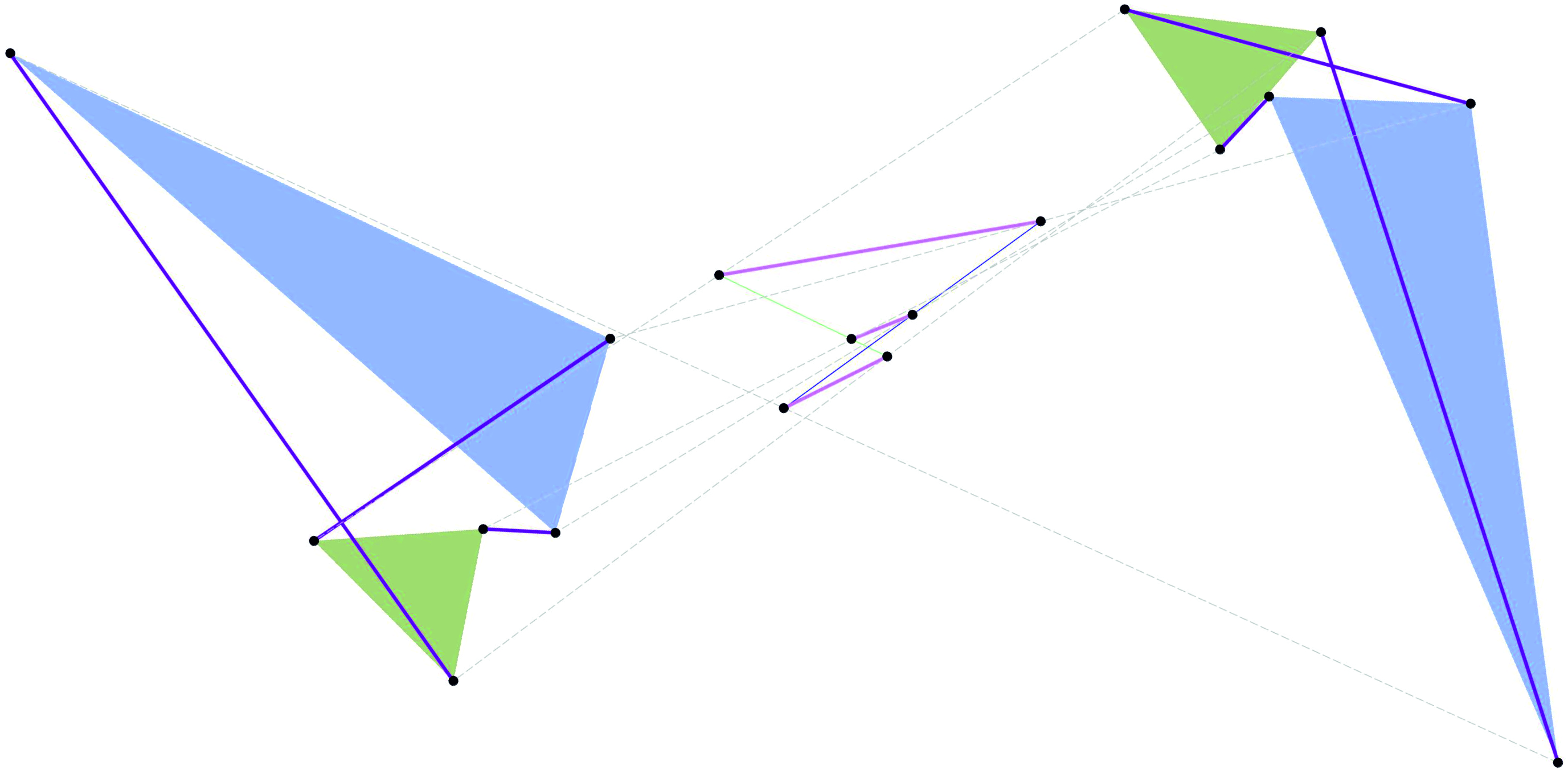}
   \begin{scriptsize}
\put(-4,0){a)}
\put(49,26){$\bVkt x_5$}   
\put(48,20){$\bVkt x_1$}
\put(65,36){$\bVkt x_3$}
\put(56,23){$\bVkt x_4$}
\put(59,28){$\bVkt x_2$}
\put(41,31){$\bVkt x_6$}
\end{scriptsize}
\end{overpic} 
  \begin{overpic}
   [width=70mm]{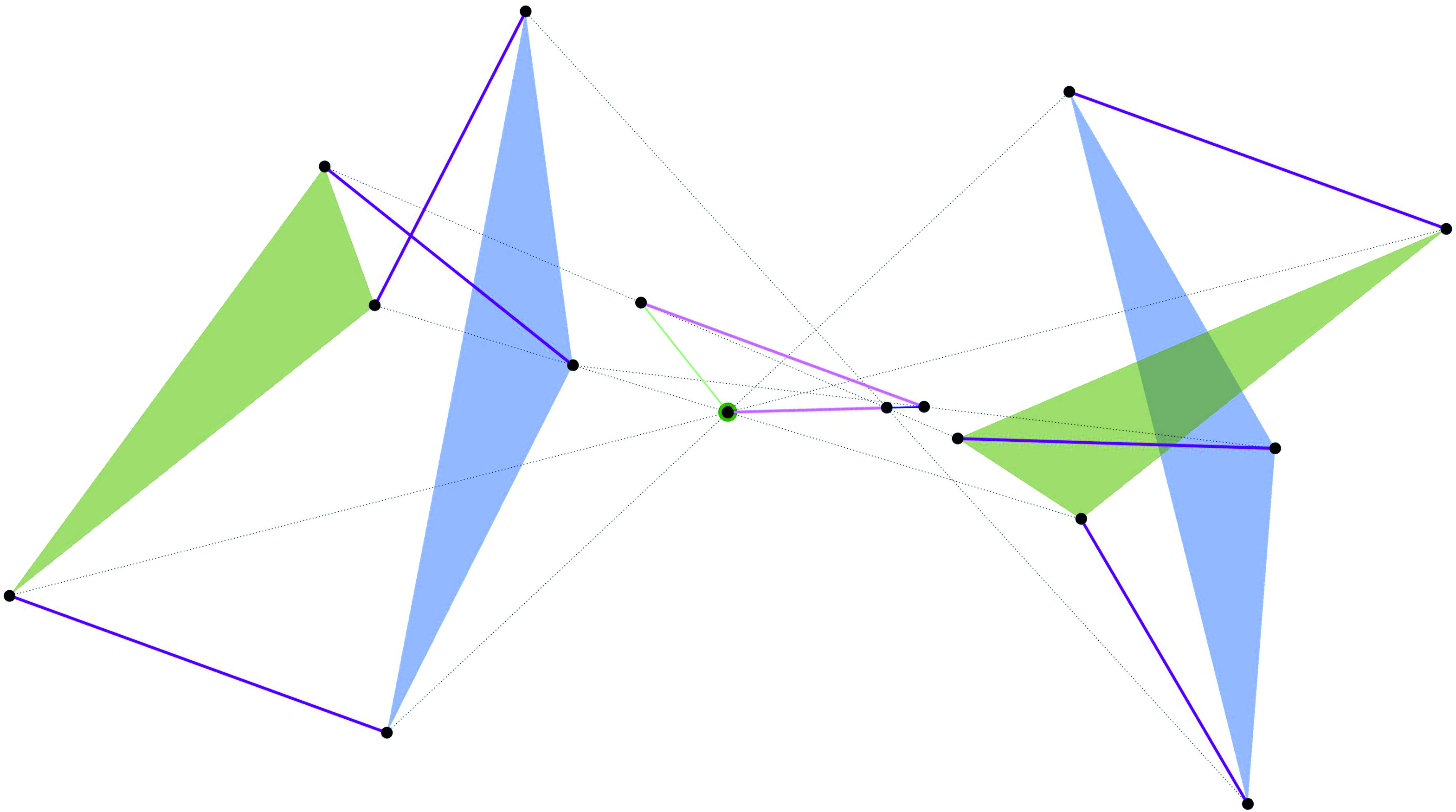}
   \begin{scriptsize}
\put(-4,0){b)}
\put(58,29){$\bVkt x_2$}   
\put(63,29){$\bVkt x_1$}
\put(37,24){$\bVkt x_3=\bVkt x_5=\bVkt x_6$}
\put(42,37){$\bVkt x_4$}
\end{scriptsize}
\end{overpic} 
  \begin{overpic}
    [width=70mm]{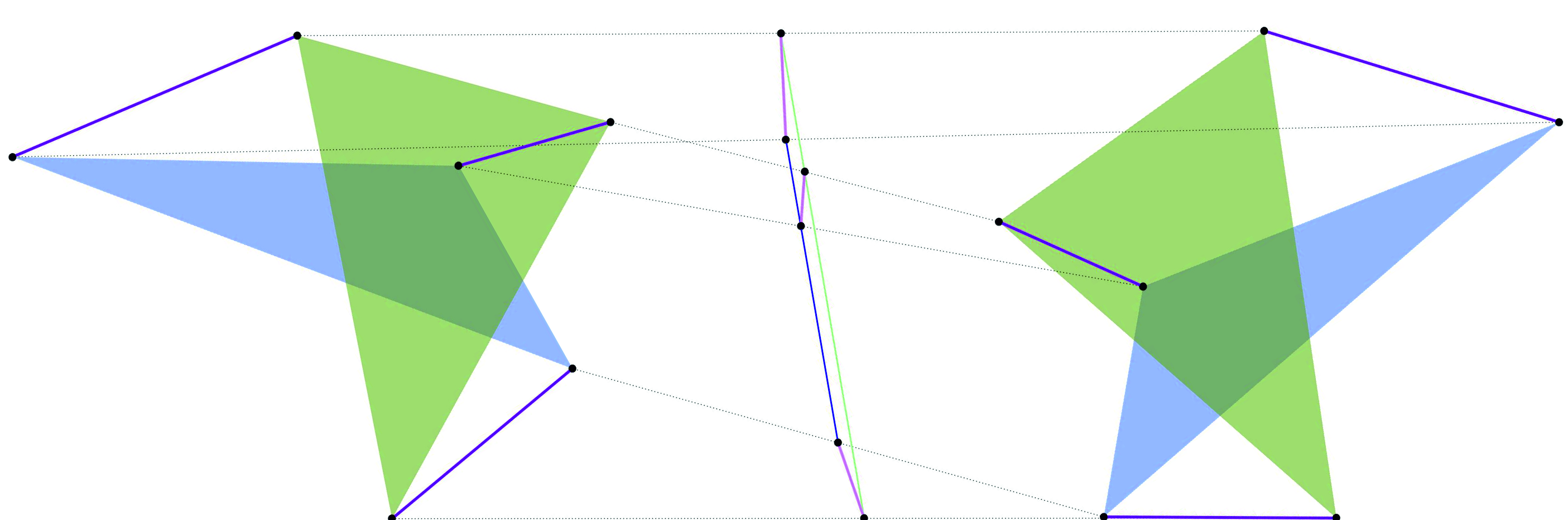}
   \begin{scriptsize}
\put(-4,0){c)}
\put(49,5){$\bVkt x_1$}   
\put(46,18){$\bVkt x_2$}
\put(45,24){$\bVkt x_3$}
\put(50,1){$\bVkt x_4$}
\put(52,22){$\bVkt x_5$}
\put(45,31){$\bVkt x_6$}
\end{scriptsize}
\end{overpic} 
  \end{center}	
\caption{Illustration of Theorem \ref{thm:setB}: (a) Case ($I_i$), (b) Case ($I_{ii}$), (c) Case ($II$). Note that if the three platform/base points of $\mathcal{G}(\Vkt X)$ and $\mathcal{G}(\Vkt X')$ are related by an indirect isometry, then the averaged three platform/base points of $\mathcal{G}(\bVkt X)$ are collinear (cf.\ Lemma \ref{lem:coll}).}
  \label{fig:setB}
\end{figure}    

\begin{figure}[t]
\begin{center}
\hfill
\begin{overpic}[height=30mm]{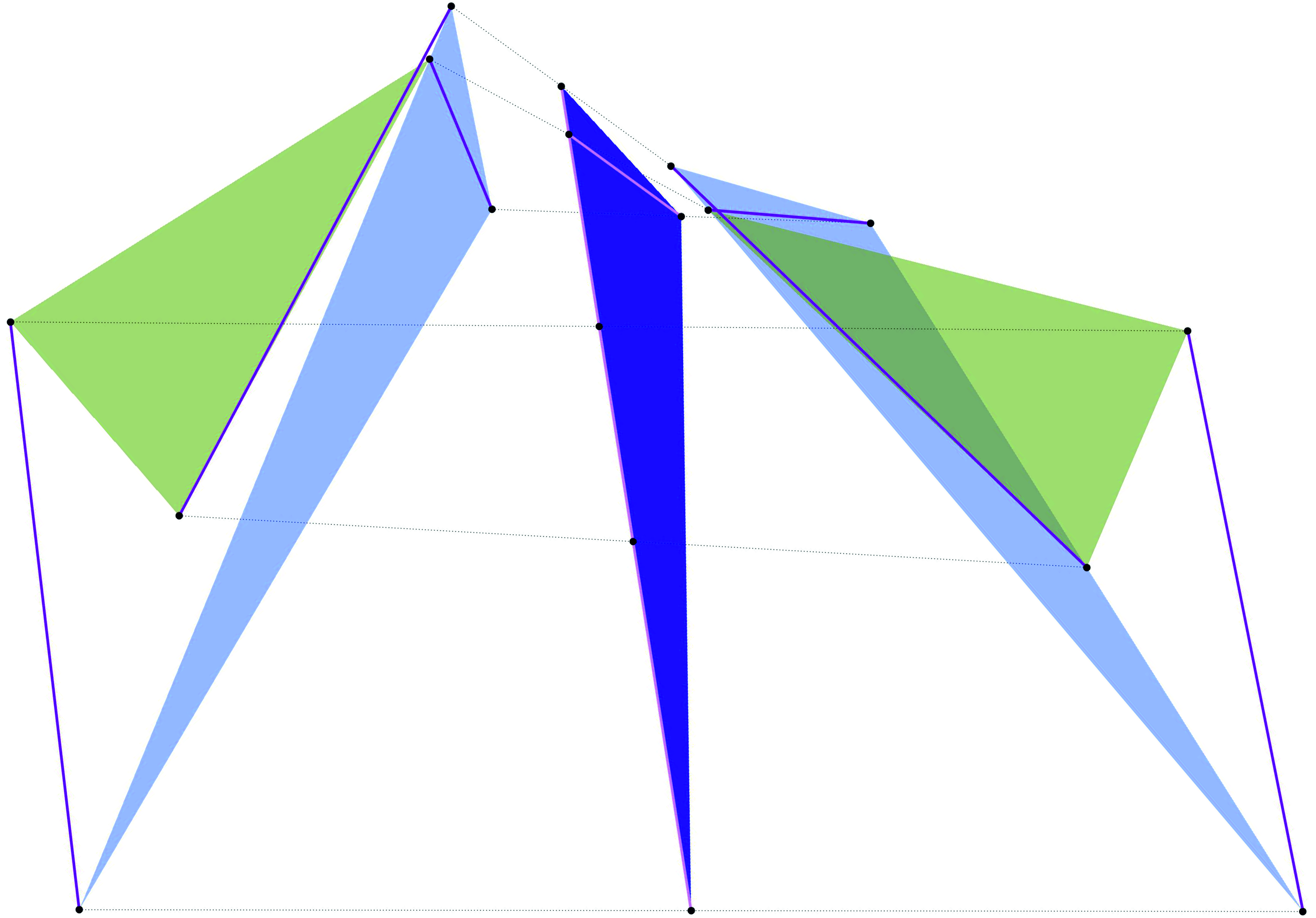}
  \begin{scriptsize}
    \put(-3,0){a)}
    \put(44.5,3){$\bVkt x_1$}
    \put(53,45){$\bVkt x_3$}
    \put(38,44){$\bVkt x_2$}
    \put(40,65){$\bVkt x_4$}
    \put(36,60){$\bVkt x_5$}
    \put(40,27){$\bVkt x_6$}
  \end{scriptsize}
\end{overpic}
\hfill
\begin{overpic}[height=28mm]{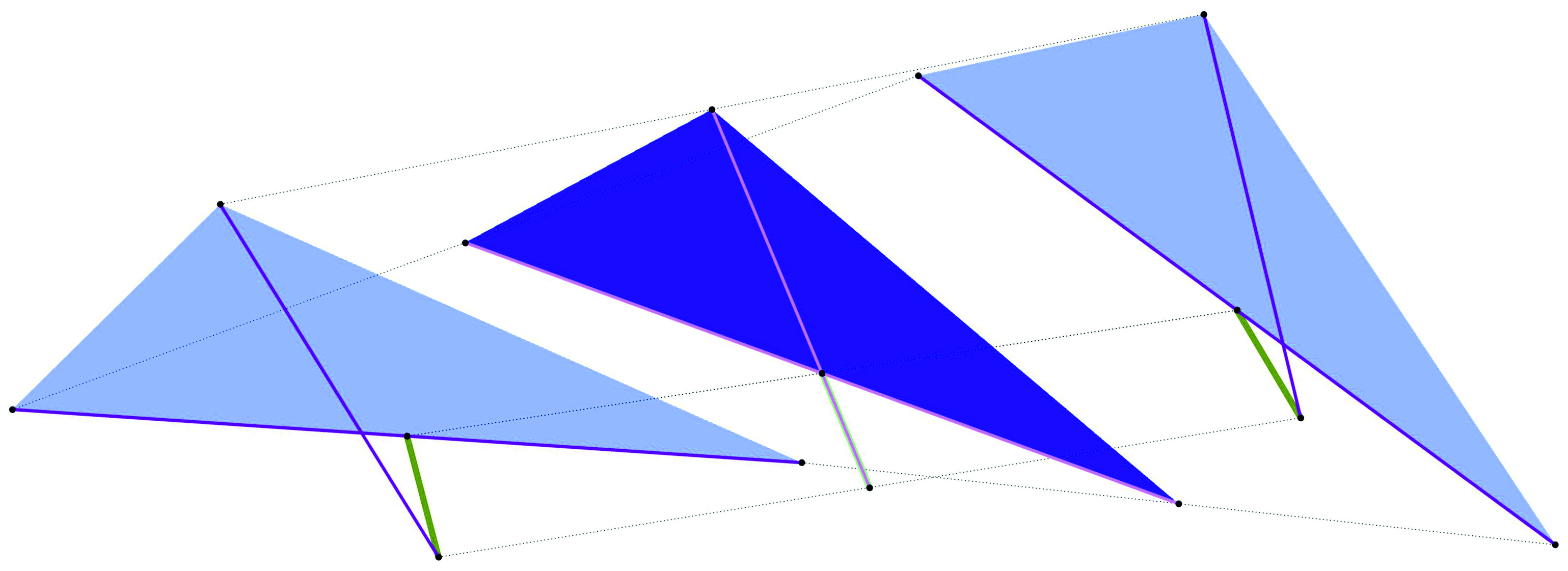}
 \begin{scriptsize}
\put(3,0){b)}
    \put(42,32){$\bVkt x_1$}
    \put(24,21){$\bVkt x_2$}
    \put(76,4){$\bVkt x_3$}
    \put(54,3){$\bVkt x_4$}
    \put(38,11){$\bVkt x_5=\bVkt x_6$}
    \end{scriptsize}
\end{overpic}
\end{center}
\caption{(a) Illustration of Case ($II_{ii}$) of Theorem \ref{thm:setC}: Five points are always collinear in the averaged configuration $\mathcal{G}(\bVkt X)$,
(b) Illustration of Case ($I_{iii}$) of Theorem \ref{thm:setA}: In Case ($I_{iii}$), the second and third legs lie along the same line, while the first leg is aligned with the platform.}
\label{fig:setCA}
\end{figure}

\section{Conclusion and Future Research}\label{sec:conc}

In Theorems \ref{thm:setB}--\ref{thm:setA} we determined the relative orientation between two incongruent realisations $\mathcal{G}(\Vkt X)$ and $\mathcal{G}(\Vkt X')$ of a pin-jointed bar-plate framework, which represents a non-degenerated  planar $3$-RPR parallel manipulator with locked P joints, in a way that the flexion order of the averaged $\mathcal{G}(\bVkt X)$ configuration increases. 
For some special cases, we were able to identify a simple geometric interpretation of these special relative orientations, but for the general cases, such characterisations remain open.

The averaging technique also works for spherical frameworks \cite{izmestiev:2009} and therefore the presented approach can also be 
adapted for spherical $3$-RPR parallel mechanisms. In this case it should be possible to construct averaged configurations with a flexion order of at least 4 by a suitable choice of relative orientations. This study is dedicated to future research. 

Theoretically the presented methodology can also be applied to spatial analogues of planar $3$-RPR parallel manipulators. However, also for these so-called Stewart-Gough platforms only a  flexion order of at least 4 for the averaged configurations can in general be achieved by a suitable choice of relative orientations.

\paragraph{{\bf Acknowledgments}}
This research was funded in whole or in part by the Austrian Science Fund (FWF) [grant DOI 10.55776/PAT1144724]. For open access purposes, the author has applied a CC BY public copyright license to any author accepted manuscript version arising from this submission.

\newpage

\appendix

\section{Proof of the results for Set A (Theorem \ref{thm:setA})}\label{sec:proofA} 

\subsection{Rotation}
To extract the second higher-order flexibility component, we compute the
greatest common divisor
\[
P = \gcd(s,s_1,s_2,s_3,s_4).
\]

\subsubsection{General Case:} $\Vkt x_{5}\neq \Vkt x'_{5}$ and $\Vkt x_{6}\neq \Vkt x'_{6}$ \newline
\noindent
In this case, the factorisation of $P$ yields three homogenous factors in $f_0,f_1$ given in Eq.\ (\ref{eq:1}), where two are linear and one is quadratic. Hence, up to four solutions over $\CC$ may occur, corresponding to the relative orientations of $\mathcal{G}(\Vkt X)$ and $\mathcal{G}(\Vkt X')$ for which $\mathcal{G}(\bVkt X)$ has flexion order of at least $2$.

In the following we describe the corresponding geometric configurations arising when each factor vanishes:
\begin{enumerate}
\item[(a)]$f_0 e_0 + f_1 e_1 = 0$: 
If this factor vanishes, then the platform of $\mathcal{G}(\bVkt X)$ degenerates to a single point.

\item[(b)]$
f_0 = 0 $: 
If this factor vanishes, then the base of $\mathcal{G}(\bVkt X)$ degenerates to a single point.

\item[(c)]$P_{I_i} = 0$ and $f_0(f_0 e_0 + f_1 e_1)\neq 0$:
After rearranging $P_{I_i}$ we can rewrite it as:
\begin{align}\label{eq:struc}
\begin{aligned}
P_{I_i}
=\,\, &T_1 (f_1 + 2l_1 f_0)(f_1 + 2l_3 f_0)- \\
  &T_2 (f_1 + 2l_1 f_0)(f_1 + 2l_2 f_0)+ \\
  &T_3 (f_1 + 2l_2 f_0)(f_1 + 2l_3 f_0)
\end{aligned}
\quad \Bigg| \quad
\begin{aligned}
T_1 &= (e_0 - 2e_1 l_2)\, a_6\, (a_5^2 + b_5^2) \\
T_2 &= (e_0-2e_1 l_3)\, a_5\, (a_6^2 + b_6^2) \\
T_3 &= (e_0 - 2e_1 l_1)\, (a_5 b_6 - a_6 b_5)
\end{aligned}
\end{align}
In the generic case where none of the terms 
\[
T_j (f_1 + 2l_k f_0)(f_1 + 2l_j f_0) 
\quad \text{for }\quad k,j \in \{1,2,3\},
\]
vanishes identically, the polynomial \( P_{I_i} \) is quadratic in \( (f_0, f_1) \).
Consequently, there exist parameter choices for which the equation \( P_{I_i} = 0 \)
admits two distinct real solutions, as illustrated in the following example.
\end{enumerate}

\begin{figure}[t]
\begin{center}
\begin{overpic}[height=40mm]{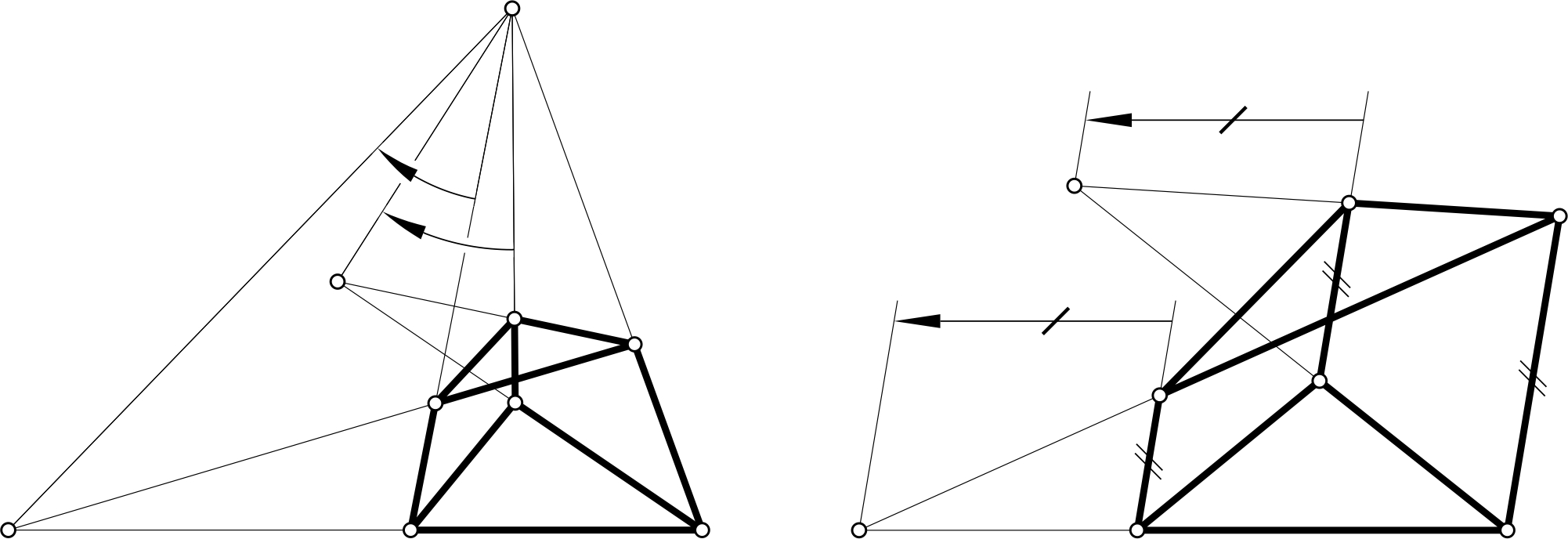}
   \begin{scriptsize}
   \put(-3,0){a)} 
   \put(34,33){$L$}
   \put(0,-3){$Q_{25}$}
   \put(19,14){$Q_{36}$}
   
    \put(44,-3){$\bVkt x_1$}
    \put(25,-3){$\bVkt x_2$}
    \put(32,5){$\bVkt x_3$}
    \put(42,12){$\bVkt x_4$}
    \put(33,15){$\bVkt x_6$}
    \put(24,9){$\bVkt x_5$}
    \put(25,21){$\alpha$}
    \put(28,17){$\beta$}
   \put(51,0){b)}
    \put(54,-3){$Q_{25}$}
   \put(61,22){$Q_{36}$}
   \put(72,-3){$\bVkt x_2$}
   \put(70,11){$\bVkt x_5$}
   \put(95,-3){$\bVkt x_1$}
   \put(97,22){$\bVkt x_4$}
   \put(83,6){$\bVkt x_3$}
   \put(83,23){$\bVkt x_6$}
\end{scriptsize}
\end{overpic}
\end{center}	
\caption{
Stachel's geometric characterization of flexion order 2 for the bar-joint framework interpretation of planar parallel manipulators with copunctal legs (a) and parallel legs (b). These are the original figures of \cite{stachel:planar} by courtesy of Hellmuth Stachel. \newline
(ad a) In the case where the three lines $[\bVkt{x}_1,\bVkt{x}_4]$, $[\bVkt{x}_2,\bVkt{x}_5]$ and $[\bVkt{x}_3,\bVkt{x}_6]$
 have the point $L$ in common the second order flexion condition is equivalent to
the equality of the two orientated angles $\alpha$ and $\beta$:
The angle $\alpha$ is determined by $[\bVkt{x}_2,\bVkt{x}_5]$ and $[L,Q_{25}]$ where $Q_{25}$ is the intersection point of  
$[\bVkt{x}_1,\bVkt{x}_2]$ and $[\bVkt{x}_4,\bVkt{x}_5]$. The angle $\beta$ is enclosed by $[\bVkt{x}_3,\bVkt{x}_6]$ and $[L,Q_{36}]$ where $Q_{36}$ is the intersection point of 
$[\bVkt{x}_1,\bVkt{x}_3]$ and $[\bVkt{x}_4,\bVkt{x}_6]$. \newline
(ad b) In the case where the three legs are parallel, the second order flexion condition is equivalent to the equality of oriented distances from $Q_{25}$ to the line $[\bVkt{x}_2,\bVkt{x}_5]$ and from  $Q_{36}$ to the line $[\bVkt{x}_3,\bVkt{x}_6]$.
}
  \label{fig:stachel}
\end{figure}

\begin{example}\label{ex:1}
   The present example corresponds to the configurations depicted in Fig.~\ref{fig:avg3rpr}(a) and Fig.~\ref{fig:setA}(a).

 We choose
$$
\Vkt x_5=(a_5,b_5)=(2,5), \qquad
\Vkt x_6=(a_6,b_6)=(-12,-6).
$$
For the initial relative orientation of
$\mathcal{G}(\Vkt X)$ and $\mathcal{G}(\Vkt X')$,
we take the Blaschke--Gr\"unwald parameters
$$
(e_0:e_1)=\left(\tfrac{24}{25}:\tfrac{7}{25}\right).
$$
Moreover, we choose the parameters describing the base anchor points on the
perpendicular bisectors as
$$
l_1=4, \qquad l_2=-2, \qquad l_3=\tfrac{7}{13}.
$$

Substituting these values into the polynomial $P_{\mathrm{I_i}}$, we obtain a
quadratic homogeneous equation in the orientation parameters $(f_0:f_1)$.
Fixing $f_0=1$, this reduces to a quadratic equation in $f_1$, whose two
real solutions are
$$
f_1^{(1)}
=
\tfrac{1}{139385930}
\sqrt{
9963395831860025
-209078895\,\sqrt{1900978050015889}
},
$$
and
$$
f_1^{(2)}
=
\tfrac{1}{139385930}
\sqrt{
9963395831860025
+209078895\,\sqrt{1900978050015889}
}.
$$
For both solutions in this example, the corresponding averaged configuration
$\mathcal{G}(\bVkt X)$ satisfies Stachel's geometric criterion (cf.\ Fig.\ \ref{fig:stachel}) and therefore
possesses flexion of order $2$. This is illustrated in Fig.~\ref{fig:stacheltest1}. 
\end{example} 

\begin{figure}[t]
\begin{center}
\quad
\begin{overpic}[width=0.6\linewidth]{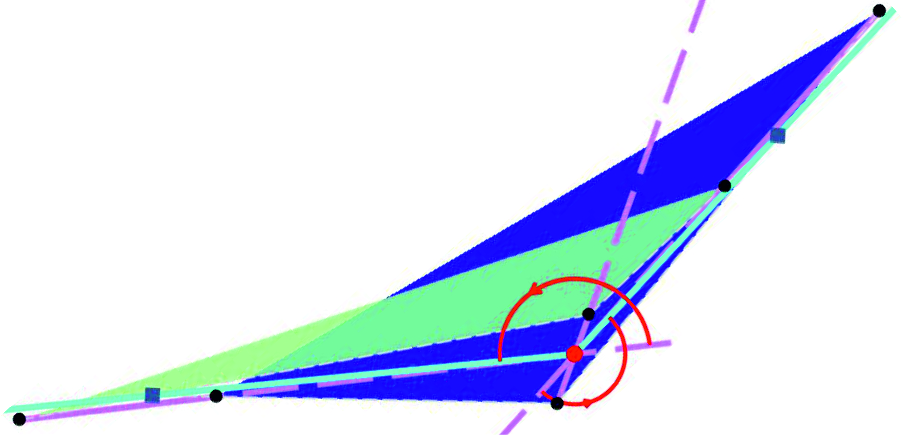}
  \begin{scriptsize}
   \put(-5,0){a)} 
   \put(64,5){$L$}
    \put(0,5){$\bVkt x_6$}
   \put(14,0){$Q_{36}$}
   \put(87,29){$Q_{25}$}
    \put(59,0){$\bVkt x_1$}
    \put(98,43){$\bVkt x_2$}
    \put(22,0){$\bVkt x_3$}
    \put(59,12){\contour{white}{$\bVkt x_4$}}
    
    \put(81,23){$\bVkt x_5$}
    \put(67,2){$\overline{\alpha}$}
    \put(57,18){\contour{white}{$\overline{\beta}$}}
   
\end{scriptsize}
\end{overpic}
\quad
\begin{overpic}[width=0.25\linewidth]{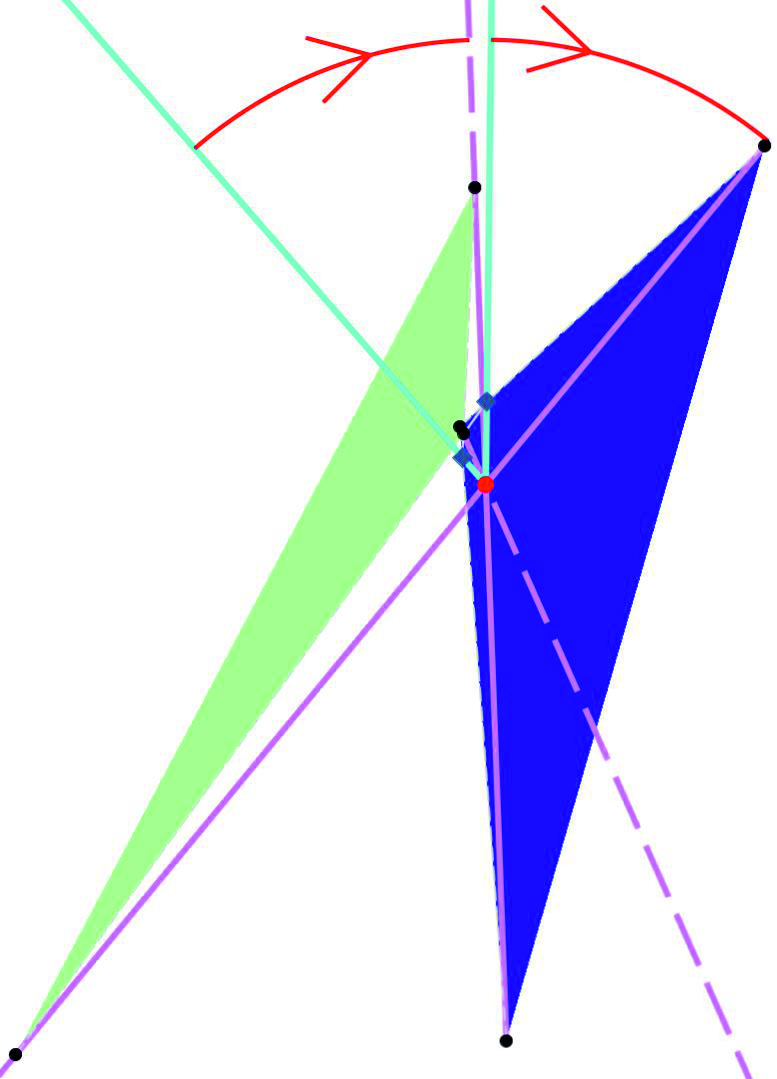}
  \begin{scriptsize}
   \put(-10,0){b)} 
   \put(38,42){$L$}
   \put(6,0){$\bVkt x_6$}
    \put(49,55){\contour{white}{$\bVkt x_1$}}
    \put(39,0){$\bVkt x_2$}
    \put(74,88){$\bVkt x_3$}
    \put(35,62){\contour{white}{$\bVkt x_4$}}

    \put(36,85){$\bVkt x_5$}
    \put(31,98){$\alpha$}
    \put(54,99){$\beta$}
  
\end{scriptsize}
\end{overpic}
\hfill
\end{center}
\caption{Stachel's test for the two solutions of Example \ref{ex:1} already illustrated in Fig.~\ref{fig:avg3rpr}a and Fig.~\ref{fig:setA}a, respectively.   
Note that, for the sake of display, the orientation of the averaged configuration
$\mathcal{G}(\bVkt X)$ has been changed. 
(a) For the first solution the angles $\alpha$ and $\beta$ become very small, therefore, their supplementary angles $\overline{\alpha}$ and $\overline{\beta}$ are displayed instead, while preserving the correct orientation. 
(b) The second solution. 
}
\label{fig:stacheltest1}
\end{figure}

\begin{figure}[htbp]
    \centering
    \begin{overpic}[width=75mm]{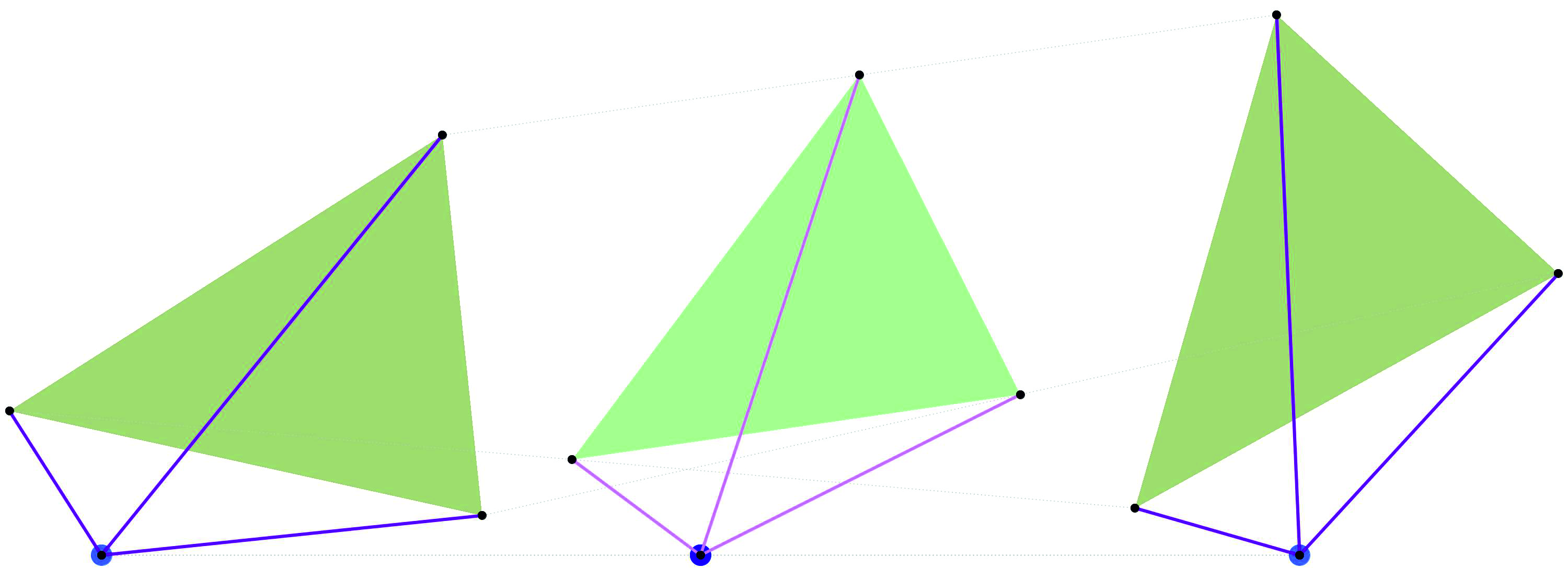}
        \scriptsize
         \put(54,33){$\bVkt x_6$} 
        \put(33,4){$\bVkt x_4$}   
        \put(65,8){$\bVkt x_5$}
        \put(34,-3){$\bVkt x_1=\bVkt x_2=\bVkt x_3$}
    \end{overpic} \medskip
\caption{$l_1=l_2=l_3=\frac{e_0}{2e_1}$, and the base of $\mathcal{G}(\bVkt X)$  is degenerated}
    \label{fig:setA-all-li-equal}
\end{figure}

\begin{remark}
There are also parameter choices for which one or more terms $$
T_j (f_1 + 2l_k f_0)(f_1 + 2l_j f_0) 
\quad \text{for }\quad k,j \in \{1,2,3\},
$$ in Eq.\ (\ref{eq:struc})
 vanish. Such cases destroy the geometric structure of the bar-plate framework, so we exclude them.
First, suppose that 
$
e_0 - 2 e_1 l_i = 0 
$ for $i=1,2,3$.
Then necessarily
$l_1=l_2=l_3=\frac{e_0}{2e_1}.$
In this case, for both $\mathcal{G}(\Vkt X)$ and $\mathcal{G}(\Vkt X')$, the three base points coincide, so the base degenerates to a single point (see Fig.\ \ref{fig:setA-all-li-equal}). Hence the resulting object is not a valid bar-plate framework, and we discard this solution.

Beside this degenerate case it can also happen that $P_{I_i}=0$ holds true under the additional condition $f_1 + 2 l_i f_0=0$ for $i\in\left\{1,2,3\right\}$. Then the $i$-th leg has zero lengths in the averaged configuration $\mathcal{G}(\bVkt X)$, which is not admissible. This is demonstrated in Fig.\ \ref{fig:legzerof}.  \hfill $\diamond$
\end{remark}

\begin{figure}[htbp]
    \centering
    \begin{overpic}[width=75mm]{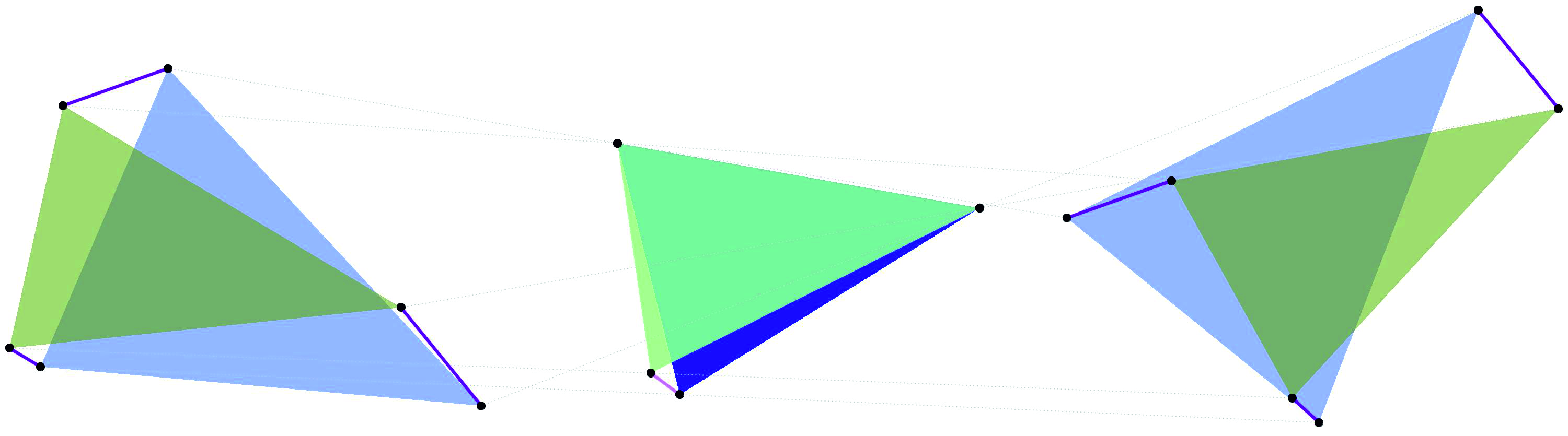}
        \scriptsize
         \put(36,3){$\bVkt x_6$} 
        \put(31,19){$\bVkt x_1=\bVkt x_4$}   
        \put(55,17){$\bVkt x_2=\bVkt x_5$}
        \put(42,-1){$\bVkt x_3$}
    \end{overpic} 
\caption{
As  $l_1=l_2=-\frac{f_1}{2f_0}$
implies $P_{I_i}=0$, the first leg and second leg  degenerate to zero length in $\mathcal{G}(\bVkt X)$.}
    \label{fig:legzerof}
\end{figure}
The preceding remark discusses those parameter choices that give rise to geometrically degenerate bar-plate frameworks. 
We now turn to a further algebraic source of extraneous solutions, namely, common zeros of the generators $s,s_1,\dots,s_4$ of $\mathcal{I}_2$ which do not imply $P=0$. For computation of these zeros we first replace each generator by its square-free part, namely, the product of its distinct irreducible factors, and then divide the result by the gcd of these reduced generators.

Each polynomial $q$ admits a factorisation
$$
q = \prod_k g_{k}^{m_{k}},
$$
with distinct irreducible factors $g_{k}$.
We denote by
$$
\dach{q} = \prod_k g_{k}
$$
the square-free part; i.e., the product of its distinct irreducible factors. 
Using this notation we can define the reduced polynomials 
\begin{equation}
    s^*=\frac{\dach s}{\dach P}, \qquad s_i^*=\frac{\dach s_i}{\dach P}
\end{equation}
for $i=1,\ldots, 4$, which generate the ideal
$$
\mathcal{I}^*_2 = \left\langle s^*, 
s_1^*, s_2^*, s_3^*, s_4^* \right\rangle.
$$
Recall that the ideal of singular points is given by
$$
\mathcal I_{\mathrm{sing}} := \langle s, \nabla s \rangle.
$$
Let us denote the variety of $I_{\mathrm{sing}}$ by
$V_{sing}$.
Using symbolic ideal computations in Maple (via the \texttt{PolynomialIdeals} package), we can verify the ideal containment
$$
\mathcal I_{\mathrm{sing}} \subset \mathcal I_2^*,
$$
by applying the command \texttt{IsSubset}. Then Hilbert’s Nullstellensatz implies
$$
V(\mathcal I_2^*) \subset V_{sing}.$$
Consequently, any solutions arising from the reduced generators correspond only to singular points of the variety $V_1$. In particular, no additional non-singular point of $V_1$ arise from partial intersections of the generators.

\subsubsection{Special Case:} 
$\Vkt x_{5}\neq \Vkt x'_{5}$ but $\Vkt x_{6}= \Vkt x'_{6}$ \newline
\noindent
In this case, the factorisation of $P$ yields three homogenous factors in $f_0,f_1$ given in Eq.\ (\ref{eq:2}). 
Hence, the averaged configuration $\mathcal{G}(\bVkt X)$  is at least of
flexion order $2$ only in one of the following cases.

\begin{enumerate}
\item[(a)]$f_0 e_0 + f_1 e_1 = 0$: 
If this factor vanishes, then the platform of $\mathcal{G}(\bVkt X)$ degenerates to a single point.
\item[(b)] $f_0 = 0$:
If this factor vanishes, then the base of $\mathcal{G}(\bVkt X)$ degenerates to a single point.

\item[(c)] 
$P_{I_{ii}} = 0$ and $f_0(f_0 e_0 + f_1 e_1)\neq 0$:
As $P_{I_{ii}} = 0$ is a linear homogenous equation in $f_0,f_1$ there always exists a real solution, which is illustrated by the next example.
\end{enumerate}

\begin{example}\label{ex2}
This example for the real solution of $
P_{I_{ii}}=0$ corresponds to Fig.~\ref{fig:setA}b. 
Choose
$$
a_3=2, \qquad b_3=-2, \qquad
e_0=\tfrac{3}{5}, \qquad e_1=\tfrac{4}{5},
$$
$$
l_1=10, \qquad l_2=\tfrac{1}{7}, \qquad
a_5=5, \qquad b_5=-2.
$$
Substituting these values into $P_{I_{ii}}$, we obtain
$$
P_{I_{ii}}
=
-\tfrac{3684}{175} f_0+\tfrac{39132}{175} f_1.
$$
After normalising by $f_0=1$, we obtain the real solution
$$
f_1=\tfrac{307}{3261}.
$$
For this choice of parameters, the corresponding averaged configuration
$\mathcal{G}(\bVkt X)$ also satisfies Stachel's geometric criterion  for flexion order $2$ 
(see Fig.\ \ref{fig:setAstachelcheck2}). 
\end{example}

\begin{figure}[t]
    \centering
    \begin{overpic}[width=75mm]{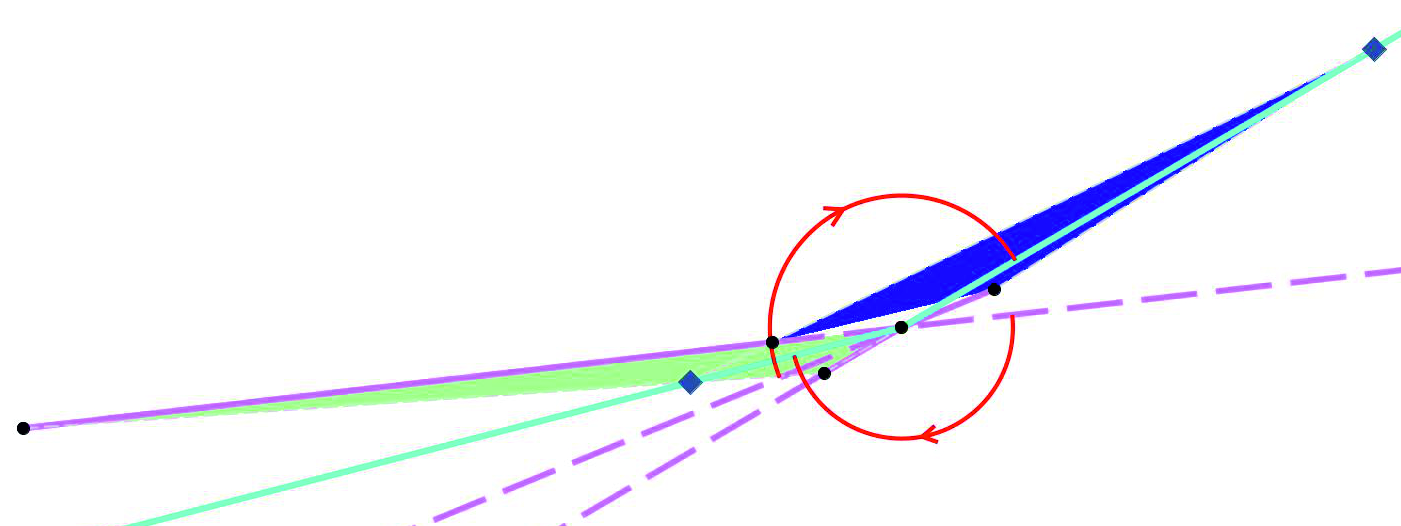}
        \scriptsize
         \put(0,4){$\bVkt x_5$} 
        \put(50,14){$\bVkt x_2$}   
        \put(55,6){$\bVkt x_4$}
        \put(93,28){$\bVkt x_1=Q_{36}$}
         \put(45,7){$Q_{25}$}
        \put(60,11){$L=\bVkt x_6$}
        \put(74,17){$\bVkt x_3$}
        \put(56,24){$\overline{\alpha}$}
        \put(66,2){$\overline{\beta}$}
    \end{overpic} 
    \caption{
    Stachel's test for the  solution of Example \ref{ex2} already illustrated in Fig.~\ref{fig:setA}b. Note that, for the sake of display, the orientation of the averaged configuration $\mathcal{G}(\bVkt X)$ has been changed. As the angles $\alpha$ and $\beta$ become very small, their supplementary angles $\overline{\alpha}$ and $\overline{\beta}$ are displayed instead.}
    \label{fig:setAstachelcheck2}
\end{figure}

\begin{remark}
As in the general case, we exclude parameter choices  $l_1=l_2=-\frac{f_1}{2f_0}$
implies $P_{I_{ii}}=0$,  that violate the geometric structure, those for which the lengths of the first leg and the second leg vanish, as these correspond to geometrically degenerate bar-plate frameworks. The details are omitted, as the procedure is identical to that of general case. 

Similarly, we apply the same algorithm as in the general case to verify that the common zeros of $s^*,s_1^*,\dots,s_4^*$ correspond to singular points of $V_1$.  \hfill $\diamond$
\end{remark}

\subsubsection{Very Special Case:} 
$\Vkt x_{5}= \Vkt x'_{5}$ and $\Vkt x_{6}= \Vkt x'_{6}$\newline
\noindent
In this case, the factorisation of $P$ yields four homogenous factors in $f_0,f_1$ given in Eq.\ (\ref{eq:3}). Thus we have to  distinguish the following cases.

\begin{enumerate}
\item[(a)]$f_0 e_0 + f_1 e_1 = 0$:
If this factor vanishes, then the platform of $\mathcal{G}(\bVkt X)$ degenerates to a single point.
\item[(b)] $f_0 = 0$:
If this factor vanishes, then the base of $\mathcal{G}(\bVkt X)$ degenerates to a single point.
\item [(c)]
$e_0 - 2 e_1 l_1=0$:
If this factor vanishes, then the platform of $\mathcal{G(\Vkt X)}$ can rotate about the point $\Vkt x_1=\Vkt x_5=\Vkt x_6$, which also holds true for the averaged framework.
\item [(d)]
$a_2 b_3 - a_3 b_2 = 0$: 
If this factor vanishes, then points $\Vkt x_2$, $\Vkt x_3$ and the origin, which equals $\Vkt x_5=\Vkt x_6$, are collinear. 
Therefore, the second leg and the third leg
become collinear in both realizations $\mathcal{G}(\Vkt X)$ and $\mathcal{G}(\Vkt X')$, which illustrated in Fig~\ref{fig:setCA}(b). 

Hence, in this case the averaged configuration has flexion order 2 independently of the chosen orientation $(f_0:f_1)$. 
\end{enumerate}

\begin{remark}
Similarly, we apply the same algorithm as in the general case to verify that the common zeros of $s^*,s_1^*,\dots,s_4^*$ correspond to singular points of $V_1$. \hfill $\diamond$
\end{remark}


\subsection{Translation}

Proceeding as in the rotational case, the polynomial $P$ admits three
non-constant factors:
$$
f_0, \qquad f_0^2 + f_1^2, \qquad P_{II},
$$
As $f_0^2 + f_1^2 \neq 0$ holds, we remain with the discussion of the following two cases. 

\begin{enumerate}
\item[(a)] $f_0 = 0$:
If this factor vanishes, then the base of $\mathcal{G}(\bVkt X)$ degenerates to a single point.

\item[(b)] $P_{II}=0$ and $f_0\neq 0$:
As $P_{II = 0}$ is a linear homogenous equation in $f_0,f_1$ there always exists a real solution, which is illustrated by the next example.
\end{enumerate}

\begin{example}\label{ex3}
This example for the real solution of $P_{II}=0$ corresponds to Fig.~\ref{fig:setA}c.
Choose
$$
a_5=6,\qquad b_5=3,\qquad a_6=-5,\qquad b_6=-2,
$$
$$
l_1=3,\qquad l_2=-2,\qquad l_3=1,\qquad d=1.
$$
Substituting these values into $P_{II}$, we obtain
$$
P_{II} = 2f_0 + 37f_1.
$$
After normalizing by $f_0=1$, this yields the real solution
$$
f_1=-\tfrac{2}{37}.
$$
In the corresponding averaged configuration
$\mathcal{G}(\bVkt X)$ all three legs are parallel, thus we have to check for Stachel's geometric criterion illustrated in Fig.\ \ref{fig:stachel}b.  This characterization holds true for the configuration $\mathcal{G}(\bVkt X)$ as illustrated in Fig.~\ref{fig:trancheck}. 
\end{example}

\begin{figure}[t]
    \centering
    \begin{overpic}[width=75mm]{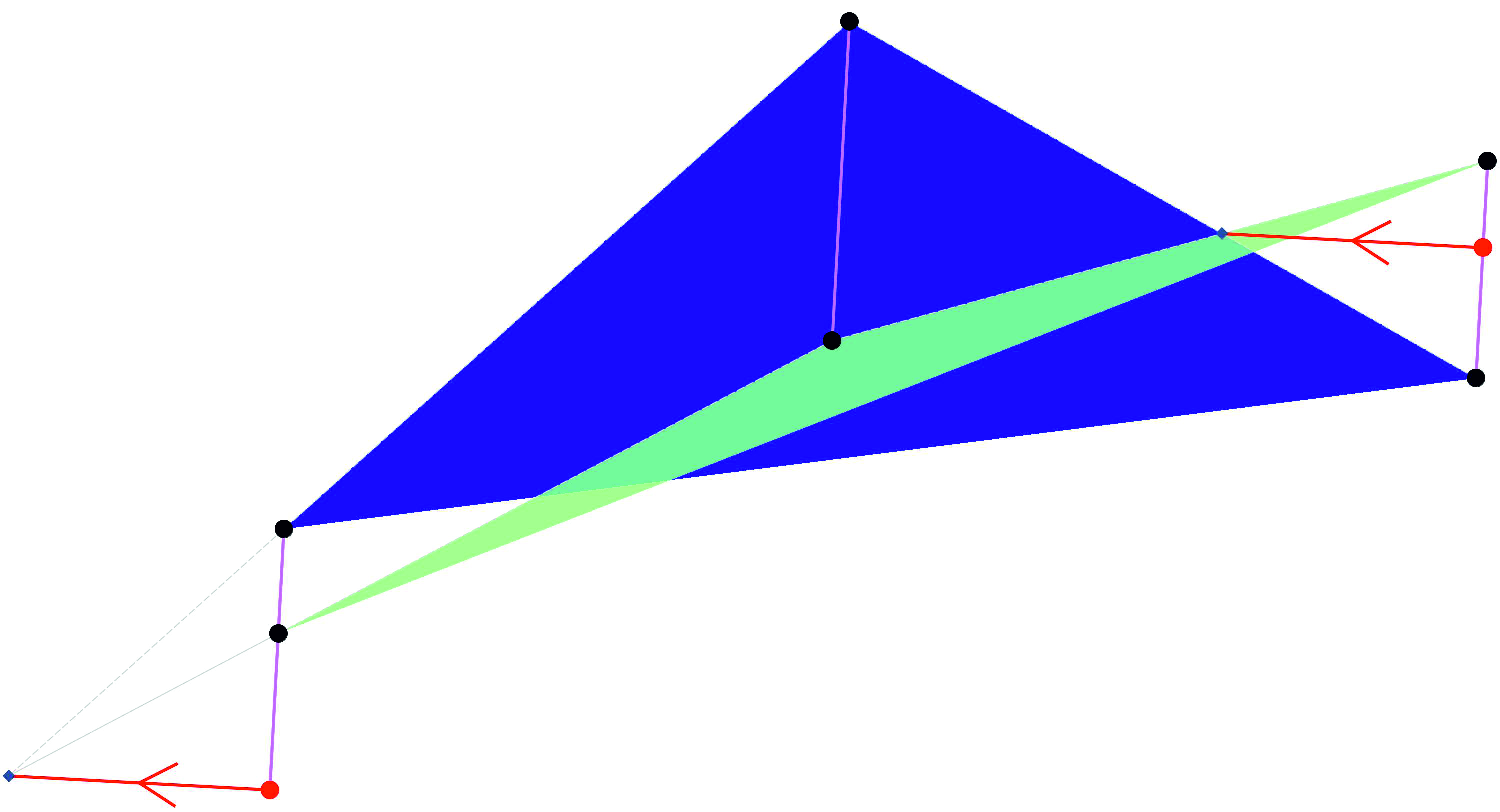}
        \scriptsize
        
         \put(-8,3){$Q_{36}$} 
         \put(79,40){$Q_{25}$}  
        \put(16,21){$\bVkt x_1$}   
        \put(96,45){$\bVkt x_5$}
        \put(95,24){$\bVkt x_2$}
        \put(19,2){$\bVkt x_4$}
        \put(50,51){$\bVkt x_3$}
        \put(54,27){\contour{white}{$\bVkt x_6$}}
    \end{overpic} 
    \caption{Stachel’s test for the solution of Example \ref{ex3} already illustrated  in Fig.~\ref{fig:setA}c.}
    \label{fig:trancheck}
\end{figure}

\begin{remark}\label{rem:self_mo}
Note that for $l_1 = l_2 = l_3$ both factors $(l_1-l_2)$ and $(l_1-l_3)$ vanish, and hence
$P_{II}=0$ is satisfied identically independent of the choice for $f_0,f_1$.
Geometrically, all base anchor points lie on parallel offsets of the same
perpendicular bisector direction. Consequently, the three legs of the
averaged configuration $\mathcal{G}(\bVkt X)$  are parallel, and the manipulator admits a circular translation as self-motion. 

We also apply the same algorithm as in the general case to verify that the common zeros of $s^*,s_1^*,\dots,s_4^*$ correspond to singular points of $V_1$.   \hfill $\diamond$
\end{remark}

\section{Proof of the results for Set B (Theorem \ref{thm:setB})}\label{sec:proofB} 

In Set B, the averaged configuration $\mathcal{G}(\bVkt X)$ is degenerate in the sense that the averaged platform points $\bVkt x_4,\bVkt x_5,\bVkt x_6$ are collinear and the averaged base points $\bVkt x_1,\bVkt x_2,\bVkt x_3$ are collinear. Hence $\mathcal{G}(\bVkt X)$ consists of two lines. This follows from the following

\begin{lemma}\label{lem:coll}
    Given is a triangle $\Vkt x_i, \Vkt x_j, \Vkt x_k$ and glide-reflected copy of it denoted by $\Vkt x'_i, \Vkt x'_j, \Vkt x'_k$. Then the averaged points $\bVkt x_i,\bVkt x_j,\bVkt x_k$ are collinear. 
\end{lemma}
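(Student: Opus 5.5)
The plan is to use the explicit normal form of a glide-reflection that the paper already fixes in Section~\ref{sec:glide}, and to show that averaging sends every point onto the axis $\ell$. Choose the reference frame so that $\ell$ is the $x$-axis. Every indirect isometry of $\RR^2$ is a glide-reflection, with a pure reflection as the special case $d=0$, so it can be written as
\[
(a,b)^{\mathrm T}\mapsto(a+d,-b)^{\mathrm T}.
\]
Hence $\Vkt x'_m=(a_m+d,-b_m)^{\mathrm T}$ for $m\in\{i,j,k\}$.

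Next we compute the midpoints:
\[
\bVkt x_m=\tfrac12(\Vkt x_m+\Vkt x'_m)=\bigl(a_m+\tfrac d2,\,0\bigr)^{\mathrm T}.
\]
All three averaged points have vanishing second coordinate. They therefore lie on $\ell$ and are collinear, and this already finishes the core argument.

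The one step that needs justification is that the axis frame can be taken without loss of generality. This holds because any indirect isometry $\beta$ of the plane is a glide-reflection along a unique axis, or has a line of fixed points in the reflection case. Collinearity is preserved under the choice of Cartesian frame, so computing in that frame is legitimate.

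For the application to Set~B, we will also point out that Lemma~\ref{lem:invariant} lets us remove any relative translation. In Set~B the base triple is related by an indirect $\alpha$ and the platform triple by an indirect $\beta$, so the lemma applies to each triple separately. This yields that $\bVkt x_1,\bVkt x_2,\bVkt x_3$ are collinear, and likewise $\bVkt x_4,\bVkt x_5,\bVkt x_6$. The same argument still works if the averaged points coincide, since coincident points are trivially collinear. I expect no real obstacle here; the only care needed is the classification of indirect planar isometries, which is standard.
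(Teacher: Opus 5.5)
Your proof is correct and follows essentially the same route as the paper: you place the glide axis on the $x$-axis, write the copy as $(a_m+d,-b_m)^{\mathrm T}$ (pure reflection when $d=0$), and note that every midpoint $\bigl(a_m+\tfrac d2,0\bigr)^{\mathrm T}$ lies on $y=0$. Your additional remarks on the choice of frame and on applying the lemma to Set B are harmless but not needed for the lemma itself.
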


\begin{proof}
Without loss of generality we can assume that the axis $l$ of the glide-reflection equals the $x$-axis.
Then for $\ell \in \{i,j,k\}$, the glide-reflected points $\Vkt x'_\ell$ are given by
$$
(a'_\ell,b'_\ell)^{\mathrm T} = (a_\ell + d,\,-b_\ell)^{\mathrm T},
$$
where $d$ is the translation distance along $l$. Note that for $d=0$ we have a pure reflection. As
$$
\bar{\Vkt x}_\ell
= \frac{\Vkt x_\ell + \Vkt x'_\ell}{2}
= \left(a_\ell + \frac{d}{2},\,0\right)^{\mathrm T}
$$
hold true, all points $\bar{\Vkt x}_i,\bar{\Vkt x}_j,\bar{\Vkt x}_k$ lie on the line $y=0$, and are therefore collinear.
\end{proof}

\subsection{Rotation}
Proceeding as in Set~A, we compute the determinant $s$ of the rigidity matrix of $\mathcal{G}(\bVkt X)$. Since $s \neq 0$ in general, a first-order flex is not always possible. We therefore derive the condition $s=0$ for first-order flexibility.

\subsubsection{General Case:} $\Vkt x_{5}\neq \Vkt x'_{5}$ and $\Vkt x_{6}\neq  \Vkt x'_{6}$\newline
\noindent
In this case, the determinant $s$ admits the following non-constant factors:
\begin{align*}
& e_1, \quad f_0^2 + f_1^2, \quad e_0^2 + e_1^2,\quad
& e_0 f_1 + e_1 f_0, \quad \Psi_5, \quad \Psi_6,\qquad
& f_0 A_{I_{i}} + f_1 B_{I_{i}},
\end{align*}
with $A_{I_{i}}$ and $B_{I_{i}}$ from Theorem \ref{thm:setB} and
\begin{align*}
\Psi_5 &= a_5(e_0 f_0 - e_1 f_1) - b_5(e_0 f_1 + e_1 f_0),\\
\Psi_6 &= a_6(e_0 f_0 - e_1 f_1) - b_6(e_0 f_1 + e_1 f_0).
\end{align*}
Since we have $(f_0,f_1)\neq (0,0)$, $(e_0,e_1)\neq (0,0)$ and $e_1\neq 0$ we remain with the discussion of the following four cases.

\begin{enumerate}
\item[(a)] $e_0 f_1 + e_1 f_0 = 0$: 
In this case, the first leg of $\mathcal{G}(\bVkt X)$ degenerates to zero length, which violates the bar-plate framework assumptions.

\item[(b)]
$\Psi_5=0$: 
Then the second leg of $\mathcal{G}(\bVkt X)$ degenerates to zero length, which violates the bar-plate framework assumptions.
\item[(c)]$
\Psi_6=0$: 
Then the third leg of $\mathcal{G}(\bVkt X)$ degenerates to zero length, which violates the bar plate framework assumptions.

\item[(d)] 
$f_0 A_{I_{i}} + f_1 B_{I_{i}}=0$ and $(e_0 f_1 + e_1 f_0)\Psi_5\Psi_6\neq 0$:
In this case, the realization corresponds to a genuine first-order flex, which is illustrated by the following example.
\end{enumerate}

\begin{example}
  This example for a real solution of $f_0 A_{I_{i}} + f_1 B_{I_{i}}=0$ corresponds to Fig.~\ref{fig:setB}b.
Choose
$$
a_5 = 6, \qquad b_5 = -4, \qquad
a_6 = 7, \qquad b_6 = 0,
$$
$$
e_0 = \tfrac{8}{17}, \qquad e_1 = \tfrac{15}{17}, \qquad
l_1 = 2, \qquad l_2 = 3, \qquad l_3 = 1.
$$
Substituting these values into $A_{I_{i}}$ and $B_{I_{i}}$, we obtain
$$
A_{I_{i}} = -\tfrac{9668}{289}, \qquad
B_{I_{i}} = \tfrac{7290}{289}.
$$
Using the normalized homogeneous choice $f_0 = 1$ yields
$$
(f_0, f_1) = \left(1, \tfrac{4834}{3645}\right).
$$
\end{example}
\subsubsection{Special Case:} $\Vkt x_{5}\neq \Vkt x'_{5}$ and $\Vkt x_{6}=  \Vkt x'_{6}$
\newline
\noindent
In this case, the determinant $s$ admits the following non-constant factors:
\begin{align*}
& e_1,\quad f_0^2+f_1^2,\quad e_0^2+e_1^2,\quad e_0f_1+e_1f_0,\quad \Psi_3,\quad \Psi_5,\quad f_0A_{I_{ii}}+f_1B_{I_{ii}},
\end{align*}
with $A_{I_{ii}}$ and $B_{I_{ii}}$ from Theorem \ref{thm:setB} and $\Psi_3 = a_3f_0-b_3f_1$.

Since we have $(f_0,f_1)\neq (0,0)$, $(e_0,e_1)\neq (0,0)$ and $e_1\neq 0$ we remain with the discussion of the following four cases.

\begin{enumerate}
\item[(a)]$e_0f_1+e_1f_0=0$:
In this case, the first leg degenerates to zero length, and the realization is degenerate.

\item[(b)] $\Psi_5=0$: 
Then the second leg collapses to zero length, and the realization is degenerate.

\item[(c)]$\Psi_3=0$: 
Then the third leg has zero length, and the realization is degenerate.
\item[(d)] 
$f_0 A_{I_{ii}} + f_1 B_{I_{ii}}=0$ and $(e_0 f_1 + e_1 f_0)\Psi_3\Psi_5\neq 0$: 
In this case, the realization corresponds to a genuine first-order flex, which is illustrated by the following example.
\end{enumerate}

\begin{example}
This example for a real solution of $f_0 A_{I_{ii}} + f_1 B_{I_{ii}}=0$ corresponds to  Fig.~\ref{fig:setCA}c.
Using the same parameter choice
$$
a_5=3,\qquad b_5=-1,\qquad
e_0=\tfrac35,\qquad e_1=\tfrac45,\qquad
l_1=2,\qquad l_2=-1,
$$
we obtain
$$
91f_0-138f_1=0.
$$
Using the projective normalization $f_0=1$, we get
$$
(f_0,f_1)=\left(1,\tfrac{91}{138}\right).
$$
\end{example}

\subsection{Translation}

In the translational case the determinant $s$ admits the following non-constant factors:
\begin{align*}
& f_1,\qquad f_0^2+f_1^2,\qquad f_0A_{II}+f_1B_{II}
\end{align*}
with $A_{II}$ and $B_{II}$ from Theorem \ref{thm:setB}. 
Since $(f_0,f_1)\neq (0,0)$, 
we distinguish the following cases.
\begin{enumerate}

\item[(a)] $f_1 = 0$:
If this factor vanishes, then the corresponding realisation degenerates, as all six points of $\mathcal{G}(\bVkt X  )$ collapse into a single point. 

\item[(b)] 
$f_0 A_{II} + f_1 B_{II} = 0$ and $f_1 \neq 0$:
In this case, the realization corresponds to a genuine first-order flex, which is illustrated in the following example. 
Moreover, also the first paragraph of Remark \ref{rem:self_mo} holds true for this case.
\end{enumerate}

\begin{example}
This example for a real solution of $f_0 A_{II} + f_1 B_{II} = 0$ corresponds to Fig.~\ref{fig:setB}c.
Choose
$$
a_5=3,\quad b_5=6,\quad
a_6=-3,\quad b_6=4,
\quad
l_1=2,\quad l_2=-3,\quad l_3=4.
$$
Substituting these values into the equation yields
$$
9f_0+25f_1=0.
$$
Imposing the normalization $f_0 = 1$, we obtain
\[
(f_0,f_1) = \left(1, -\tfrac{9}{25}\right).
\]
\end{example}

\section{Proof of the results for Set C (Theorem \ref{thm:setC})}\label{sec:proofC}

\subsection{Glide-reflection}
In the case of a glide-reflection, it is important to notice that the averaged platform $\mathcal{G(\bVkt X)}$ will always degenerate into a line due to 
Lemma \ref{lem:coll}.

In this case, the determinant $s$ admits the following non-constant factors:
\begin{align*}
& f_0,\quad d,\quad
2f_0l_1+f_1,\quad
2f_0l_2+f_1,\quad
2f_0l_3+f_1,\quad
& a_5b_6-a_6b_5-a_5+a_6.
\end{align*}

Since we assume $d\neq 0$ in this glide-reflection case, we have to distinguish between the following cases. 
\begin{enumerate}[(a)]
    \item 
    $f_0 = 0$: 
    In this case, the platform of $\mathcal{G}(\bVkt X  )$ degenerates.
    \item 
    $2f_0 l_i + f_1 = 0$ for some $i \in \{1,2,3\}$:
    In this situation, the length of the corresponding $i$-th leg of $\mathcal{G}(\bVkt X  )$ becomes zero, which violates the geometric constraints of the bar-plate framework.
    
    \item 
    $a_5 b_6 - a_6 b_5 - a_5 + a_6 = 0$: This condition implies the collinearity of the three platform points of $\mathcal{G}(\Vkt X  )$ and therefore it contradicts our assumptions. 
\end{enumerate}

\subsection{Reflection}

\subsubsection{General Case:} $\Vkt x_{5}\neq \Vkt x'_{5}$ and $\Vkt x_{6}\neq \Vkt x'_{6}$
\newline
\noindent
In Case ($II_i$), all six points are collinear. Under the reflection
$(a,b)\mapsto(a,-b)$, the averaged platform points satisfy
$$
\Vkt x_j = (a_j,0),
$$
and hence lie on the $x$-axis.
Moreover, for each leg, the midpoint $m_i$ of $\Vkt x_{i+3}$ and $\Vkt x'_{i+3}$ is given by
$$
\Vkt m_i = (a_{i+3},0),
$$
and the corresponding bisector direction is parallel to $(1,0)$. Therefore, all base anchor points also lie on the $x$-axis. Therefore, all six points are collinear which corresponds to a singular point of $V_1$.

\subsubsection{Special Case:} $\Vkt x_{5}\neq \Vkt x'_{5}$ but $\Vkt x_{6}= \Vkt x'_{6}$ \newline
\noindent
In this case, the first-order flexion condition $s = 0$ is always satisfied. 
Therefore, we investigate the higher-order condition by computing the greatest common divisor $P$, which factors into
$$
f_0, \qquad b_3, \qquad P_{II_{ii}},
$$ 
with $P_{II_{ii}}$ from Theorem \ref{thm:setC}. In the following we discuss these three cases.
\begin{enumerate}
\item[(a)] $f_0 = 0$:
    In this case, the platform of $\mathcal{G}(\bVkt X  )$ degenerates to a point.

\item[(b)] $b_3=0$:
In this case, $\Vkt x_3$ lies on the reflection axis. Consequently, all three averaged platform points and all three averaged base points lie on the $x$-axis. Therefore the averaged configuration corresponds to a singular point of $V_1$.

\item[(c)] $P_{II_{ii}}=0$ and $f_0b_3\neq 0$:
As the polynomial $P_{II_{ii}}$ is quadratic in $(f_0, f_1)$ there exist parameter choices such that two distinct real solutions exist, as demonstrated by the following example.
 
\end{enumerate}

\begin{example}\label{exl}
This example for two real solutions of $P_{II_{ii}}=0$ correspond to Fig.~\ref{fig:setCA}a and Fig.~\ref{fig:setcIii2}.
Choose
$$
a_3 = 7,\qquad b_3 = -2,\qquad
a_5 = 4,\qquad b_5 = 5, \qquad a_6 = 9,
$$
$$
b_6 = 0,\qquad
e_0 = \tfrac35,\qquad e_1 = \tfrac45, \qquad
l_1 = 10,\qquad l_2 = -9,\qquad l_3 = 2.
$$

The corresponding real solutions for $f_0$ and $f_1$ are
$$(f_0,f_1)=\left(1,\,
-\tfrac{1}{149790}
\sqrt{11226910290-23666820\sqrt{221549}}\right)
$$
$$(f_0,f_1)=\left(1,\,
-\tfrac{1}{149790}
\sqrt{11226910290+23666820\sqrt{221549}}\right)$$
For both solutions, the five points
$
\bVkt x_1,\ \bVkt x_2,\ \bVkt x_4,\ \bVkt x_5,\ \bVkt x_6
$
are collinear, while $\bVkt x_3$ is the exceptional point.
Then Stachel's criterion for second-order flexion is satisfied since $\alpha=\beta=0$.
\end{example}

\begin{figure}[t]
\begin{center}

\hfill
\begin{overpic}[height=45mm]{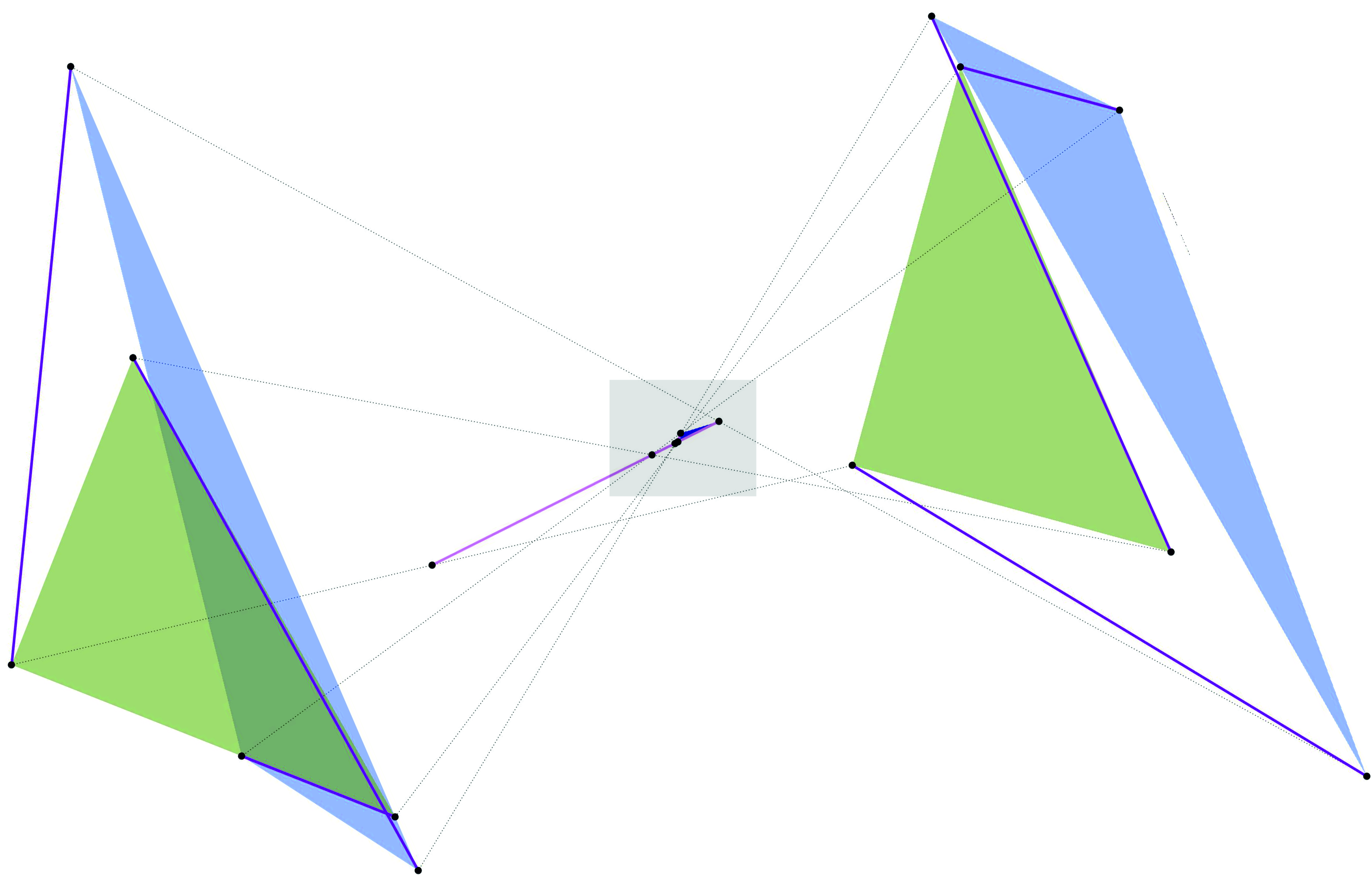}
  \scriptsize
    \put(-3,0){(a)}
  \put(28,19){$\bVkt x_6$}
\end{overpic}
\hfill
\raisebox{3mm}{ 
\begin{overpic}[height=28mm]{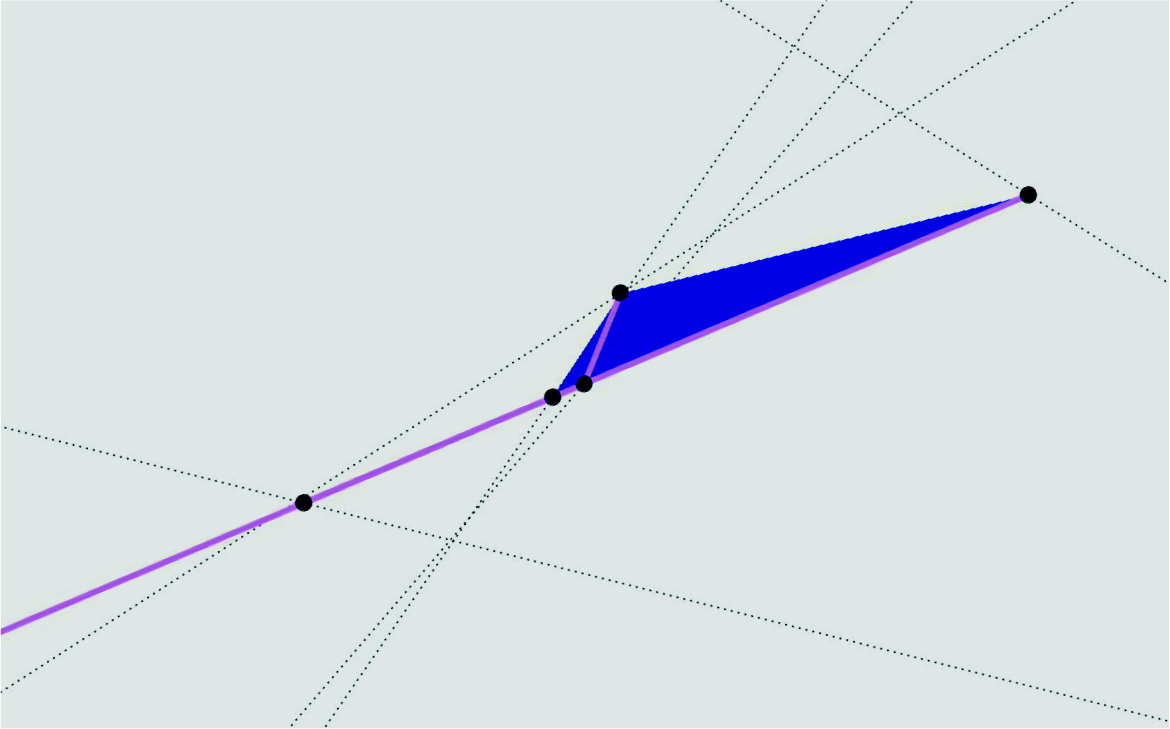}
    \scriptsize
    \put(0,-6){(b)}
    
    \put(22,14){$\bVkt x_5$}
    \put(85,40){$\bVkt x_2$}
    \put(51,42){$\bVkt x_3$}
    \put(50,25){$\bVkt x_4$}
    \put(43,23){$\bVkt x_1$}
\end{overpic}
}
\hfill
\end{center}
\caption{(a)
Illustration of second solution of Example \ref{exl}, while the first one is already displayed  in  Fig.~\ref{fig:setCA}a.  
(b) Zoomed view of the central region.}
\label{fig:setcIii2}
\end{figure}

\begin{remark}
As in last section, degenerate parameter choices (e.g.\ zero leg lengths) are excluded. The analysis of the $V(\mathcal I_2^*)$ follows the same procedure as earlier section, and the corresponding common zeros are again contained in the singularity variety $V_{sing}$. \hfill $\diamond$
\end{remark}

\subsubsection{Very Special Case:}
$\Vkt x_{5}= \Vkt x'_{5}$ and $\Vkt x_{6}= \Vkt x'_{6}$ \newline
\noindent
In this case, the first-order flexion condition $s=0$ is not identically satisfied. As $s$ splits up into the follow factors
$$
b_2,\qquad b_3,\qquad a_5 - a_6,\qquad 2f_0 l_1 + f_1,
$$
we have to distinguish the following cases.

\begin{enumerate}
\item[(a)] $b_2 = 0$:
In this case, the five points of the averaged configuration $\mathcal{G}(\bVkt X )$ lie on a common line, with only $\bVkt x_6$ off the line. Consequently, leg 1 and leg 2 have the same direction.

\item[(b)]$b_3 = 0$: 
Now the five points of the averaged configuration $\mathcal{G}(\bVkt X )$ lie on a common line, with only $\bVkt x_5$ off the line.
Consequently, leg 1 and leg 3 have the same direction.

\item[(c)] 
$a_5 = a_6$: In this case the corresponding two platform points of $\mathcal{G}(\Vkt X )$ coincide. In the averaged configuration $\mathcal{G}(\bVkt X)$ leg 1 becomes collinear with the platform.

\item[(d)] $2 f_0 l_1 + f_1 = 0$: Finally,  the first leg of $\mathcal{G}(\bVkt X )$ become zero length.
This closes the discussion of all cases. 
\end{enumerate}

\end{document}